%% file: main.tex
\documentclass{article}

\usepackage[T1]{fontenc}
\usepackage[utf8]{inputenc}
\usepackage{arxiv}
\usepackage{microtype}
\usepackage[numbers]{natbib}
\usepackage{hyperref}
\usepackage{amsmath,amssymb,amsthm}
\usepackage{graphicx}
\usepackage{verbatim}
\usepackage{color}
\usepackage{url}
\usepackage{xcolor}
\IfFileExists{booktabs.sty}{\usepackage{booktabs}}{}
\IfFileExists{siunitx.sty}{\usepackage{siunitx}}{}
\IfFileExists{soul.sty}{\usepackage{soul}\soulregister{\cite}{1}}{}
\IfFileExists{orcidlink.sty}{\usepackage{orcidlink}}{%
  \newcommand{\orcidlink}[1]{\textsuperscript{\tiny ORCID}}}
\providecommand{\toprule}{\hline}
\providecommand{\midrule}{\hline}
\providecommand{\bottomrule}{\hline}

\providecommand{\SI}[2]{#1\,#2}
\providecommand{\second}{s}
\providecommand{\gibi}{Gi}
\providecommand{\byte}{B}

\newtheorem{proposition}{Proposition}
\newtheorem{theorem}{Theorem}
\newtheorem{lemma}{Lemma}
\DeclareMathOperator{\Var}{Var}
\DeclareMathOperator{\E}{\mathbb{E}}
\newcommand{\EVar}{\E\Var}        

\title{Graph Dirichlet Energy for Fitting-Free Bivariate Causal Inference}
\author{Povilas Daniu\v{s}is\orcidlink{0000-0001-5977-827X} \\
Neurotechnology and Vytautas Magnus University \\
\texttt{povilas.daniusis@gmail.com}}
\date{}
\renewcommand{\shorttitle}{Graph Dirichlet Energy for Bivariate Causal Inference}
\hypersetup{pdftitle={Graph Dirichlet Energy for Fitting-Free Bivariate Causal Inference},
pdfauthor={Povilas Daniusis}, hidelinks}
\begin{document}
\maketitle

\begin{abstract}
We propose the Graph Dirichlet Energy Causal Inference (GDECI) criterion for
distinguishing cause from effect in bivariate settings. For a candidate
direction $X\to Y$, we construct a kernel similarity graph from $X$ and
treat $Y$ as a graph signal. The score is the Dirichlet energy of this
signal, induced by the unnormalized graph Laplacian. The reverse score
swaps the variables, and the lower-energy direction is inferred as causal.
The method requires neither regression fitting nor independence testing.
After the empirical probability integral transform, its decisions are
invariant under strictly monotone transformations of either variable, and
the graph operations reduce to convolutions, allowing exact evaluation
in $O(n\log n)$ time and $O(n)$ memory. At the population level, GDECI
compares degree-weighted Bayes squared prediction errors plus a non-negative
smoothness term of the regression function. As the bandwidth tends to
zero, the weights become uniform and the smoothness term vanishes, reducing
the criterion to a RECI-type comparison of expected conditional variances
in copula space. Empirically, we compare GDECI with QCCD, GRCI, CAM, IGCI
(two variants), RESIT, LOCI, QPE-k, and RECI (two variants), in accuracy on
1899 cause--effect pairs and in inference speed. GDECI ranked fifth of
eleven methods in macro-averaged accuracy on 12 synthetic suites ($0.776$).
On the real-world Tuebingen pairs, it ranked third in unweighted accuracy
($0.684$) and fifth in weighted accuracy ($0.678$). It was also the fastest
of the compared methods up to $n\approx10^{4}$. The accompanying code repository:~\url{https://github.com/povidanius/gdeci}.

\end{abstract}

\section{Introduction}

Given an i.i.d.\ sample of $(X,Y)\sim P_{X,Y}$, deciding whether $X\to Y$ or
$Y\to X$ is an important problem in causal discovery, with applications
wherever interventions are costly or infeasible~\cite{Mooij2016}. From the
observational joint distribution alone, the causal direction is in general
not identifiable without additional assumptions. Bivariate causal-discovery
methods therefore exploit statistical or structural properties that hold in
the causal direction but generally fail when the roles of cause and effect
are reversed.

A broad class of methods obtains such asymmetries by restricting the
data-generating mechanism. For instance, additive-noise
models~\cite{Hoyer2009} represent the effect as a function of the cause
plus an independent disturbance, $Y=f(X)+N$ with $N$ independent of $X$.
RESIT~\cite{Peters2014} fits a nonlinear regression in each direction and
tests whether the residuals are independent of the regressor, while
CAM~\cite{Buhlmann2014} compares the Gaussian likelihoods of additive
models fitted in the two directions. Location--scale (heteroscedastic) noise models generalize this to $Y=f(X)+g(X)N$, so that the noise scale may depend on the cause and the noise is recovered as the standardized residual $(Y-f(X))/g(X)$. GRCI~\cite{StroblLasko2023}
estimates the conditional mean and scale in each direction and selects the
direction whose standardized residual is less dependent on the candidate
cause, measured by mutual information. LOCI~\cite{immer2023loci} fits $f$
and $g$ with neural networks and selects the direction with either the
higher likelihood or the weaker HSIC dependence between the standardized
residual and the candidate cause. These restrictions make the direction
identifiable because, apart from non-generic cases, no model of the same
form holds in the reverse
direction~\cite{Hoyer2009,Peters2014,StroblLasko2023,immer2023loci}.
QPE-k~\cite{chen2026qpe} restricts the conditional distribution
through the quantile partial effect, defined by differentiating the
conditional quantile function with respect to the cause at a fixed
quantile level. For each value of the cause, this quantity, viewed as a function of the response value, is assumed to be a linear combination of finitely many known basis functions. This assumption contains additive and heteroscedastic noise models as
special cases.

Other methods derive such asymmetries from the principle of independence of cause and mechanism (ICM)~\cite{JanzingSchoelkopf2010},
which they formalize in different ways. For instance, IGCI relies on the fact that, if the
cause distribution and the data-generating mechanism are chosen
independently, irregularities of the latter are generally reflected in the
effect distribution, resulting in an asymmetry in relative
entropy~\cite{Daniusis2010,Janzing2012}. QCCD approaches the independence
of cause and mechanism through minimum description length, comparing the
quantile scores of nonparametric quantile regressions in the two directions
as proxies for code length~\cite{tagasovska2020qccd}. RECI compares the least-squares errors of regressing each variable on the other. Under an independence postulate, the error is smaller in the causal direction for equally scaled variables, invertible nonlinear mechanisms, and sufficiently small noise~\cite{Bloebaum2018,Bloebaum2019}.

In this paper, we investigate whether asymmetries in graph Dirichlet energy can reveal the causal direction between two variables. We treat
one variable as a signal on a similarity graph built from the other and
measure its Dirichlet energy induced by the unnormalized graph
Laplacian. Although graph Dirichlet energy has been widely used in machine
learning and graph signal
processing~\cite{BelkinNiyogi2003,vonLuxburg2007,Shuman2013}, to the best
of our knowledge, this graph-based framework was not investigated from the perspective of
bivariate causal asymmetry. The main contributions are:

\begin{itemize}
\item \textbf{Theoretical and methodological contributions.} We propose a fitting-free GDECI criterion for bivariate causal inference, whose
  decisions are invariant under strictly monotone transformations of either variable, and show that its
  population form decomposes into a weighted Bayes squared prediction error and a
  smoothness term. As the bandwidth tends to zero, the smoothness term vanishes, so that GDECI reduces to a RECI-type comparison in copula space. We give a sufficient condition under which GDECI selects the causal direction for non-injective mechanisms, and show that the smoothness term is of smaller order in the
  causal direction for monotone additive and heteroscedastic mechanisms with
  bounded noise. We also derive a fast algorithm to compute the GDECI criterion in $O(n\log n)$ time and $O(n)$ memory.
\item \textbf{Empirical study.} We benchmark GDECI against QCCD, GRCI, CAM, IGCI, RESIT, LOCI, QPE-k, and RECI methods on 1899 cause--effect pairs and report its empirical profile for both accuracy and inference time.
\end{itemize}

We introduce the proposed criterion in Section~\ref{sec:method}, discuss its theoretical
properties in Section~\ref{sec:theory}, and present an efficient algorithm in
Section~\ref{sec:fft}. Section~\ref{sec:experiments} is devoted to the experiments.
Finally, we discuss limitations in Section~\ref{sec:limitations} and conclude our paper in Section~\ref{sec:conclusion}. Proofs and other technical information are provided in Appendices~\ref{app:invariance_appendix},~\ref{app:prop1},~\ref{app:smoothness}, ~\ref{app:fast}, and~\ref{app:auxilairy_info}.

\section{Method}\label{sec:method}

Let $X,Y$ have continuous distribution functions $F_X,F_Y$ and set $U=F_X(X)$,
$V=F_Y(Y)$, so both marginals are uniform on $[0,1]$ and the directional signal is not
confounded by differing one-dimensional marginals. For i.i.d.\ copies
$(U_i,V_i)_{i=1}^n$ and a bandwidth $\sigma>0$ define
\[
\begin{aligned}
k_\sigma(a,b)&=\exp\!\Big[-\tfrac{(a-b)^2}{2\sigma^2}\Big],
&W^U_{ij}&=k_\sigma(U_i,U_j),\\
d^U_i&=\sum_j W^U_{ij},
&D^U&=\operatorname{diag}(d^U_1,\dots,d^U_n),
\end{aligned}
\]
\[
L^U=D^U-W^U,
\]
and, with $v=(V_1,\dots,V_n)^\top$,
\begin{align}
E_n(U\!\to\!V)&=v^\top L^Uv
=\tfrac12\textstyle\sum_{i,j}W^U_{ij}(V_i-V_j)^2,\nonumber\\
M_n(U)&=\textstyle\sum_{i,j}W^U_{ij},\qquad
\Delta_n(U\!\to\!V)=\dfrac{E_n(U\!\to\!V)}{M_n(U)} .\label{eq:gdeci_score}
\end{align}

The reverse score swaps $U$ and $V$, and the direction with the smaller $\Delta_n$ is
returned. In practice, $U_i$ and $V_i$ are replaced by the normalized empirical
ranks $\operatorname{rank}(X_i)/(n+1)$ and $\operatorname{rank}(Y_i)/(n+1)$, respectively.

\paragraph{Bandwidth.} We parameterize $\sigma$ by a dimensionless multiplier $m>0$
relative to the median interpoint distance,
$\widehat\sigma(z;m)=m\cdot\operatorname{median}_{i<j}|z_i-z_j|$, recomputed per pair,
direction and preprocessing, so $m$ is the only free parameter. Under marginal equalization the two samples are permutations of the same
rank grid, hence $\widehat\sigma(u;m)=\widehat\sigma(v;m)$.

\paragraph{Invariance.}
Since the orientation of a cause--effect pair should not depend on the measurement scale, invariance under strictly monotone transformations of either variable ensures that changes
of measurement units or scale orientation do not affect the inferred
causal direction. GDECI satisfies this requirement through marginal
equalization and the use of the unnormalized graph Laplacian (Appendix~\ref{app:invariance_appendix}). 
For the normalized Laplacian $L_{\mathrm{norm}}=I-D^{-1/2}WD^{-1/2}$, whose null space is spanned by
$D^{1/2}\mathbf 1$ rather than $\mathbf 1$, invariance under strictly
decreasing transformations generally fails.



\section{Theoretical analysis}\label{sec:theory}

We state the main theoretical results below and defer all proofs to
Appendices~\ref{app:prop1} and~\ref{app:smoothness}. Throughout,
$U=F_X(X)$ and $V=F_Y(Y)$ are uniform on $[0,1]$. Write
$d_\sigma(u)=\int_0^1k_\sigma(u,t)\,dt$,
$\mu_\sigma=\int_0^1d_\sigma(u)\,du$ and $w_\sigma=d_\sigma/\mu_\sigma$
(the \emph{degree weights}). Let $r(u)=\E(V\mid U=u)$,
$s(v)=\E(U\mid V=v)$, $\nu_r(u)=\Var(V\mid U=u)$ and
$\nu_s(v)=\Var(U\mid V=v)$. For $g\in L^2[0,1]$, let
$\mathcal D_\sigma[g]=\frac12\int_0^1\!\int_0^1k_\sigma(u,t)
(g(u)-g(t))^2du\,dt$. Define
\begin{equation}\label{eq:pop}
Q_\sigma(U\!\to\!V):=\E[w_\sigma(U)\nu_r(U)]
+\mathcal D_\sigma[r]/\mu_\sigma,
\end{equation}
and $Q_\sigma(V\!\to\!U)$ likewise, with $(s,\nu_s)$ in place of
$(r,\nu_r)$.

\begin{proposition}[Population form]\label{prop:pop}
For every fixed $\sigma>0$, almost surely as $n\to\infty$,
$\Delta_n(U\!\to\!V)\to Q_\sigma(U\!\to\!V)$ and
$\Delta_n(V\!\to\!U)\to Q_\sigma(V\!\to\!U)${\color{black}, also for the rank-based
scores.} As $\sigma\downarrow0$, $\mathcal D_\sigma[g]/\mu_\sigma\to0$
for every $g\in L^2[0,1]$, $Q_\sigma(U\!\to\!V)\to\EVar(V\mid U)$ and
$Q_\sigma(V\!\to\!U)\to\EVar(U\mid V)$.
\end{proposition}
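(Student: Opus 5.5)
The plan has three parts: U-statistic limits for the oracle scores, a Lipschitz comparison for the rank-based scores, and a Dirichlet-energy estimate for the small-bandwidth limit.

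\textbf{Oracle scores.} Write $\Delta_n = (E_n/n^2)/(M_n/n^2)$ and treat numerator and denominator separately. Since $W^U_{ii}=1$ and the diagonal terms of $E_n$ vanish, $M_n/n^2 = n^{-1} + \frac{n-1}{n}\binom{n}{2}^{-1}\sum_{i<j}k_\sigma(U_i,U_j)$. This is a U-statistic with bounded kernel, so by the strong law of large numbers for U-statistics (Hoeffding) it converges a.s. to $\E k_\sigma(U,U') = \int_0^1 d_\sigma(u)\,du = \mu_\sigma > 0$. Likewise $E_n/n^2$ is, up to $O(1/n)$, a U-statistic with bounded kernel $\tfrac12 k_\sigma(U_i,U_j)(V_i-V_j)^2$. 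It therefore converges a.s. to $\tfrac12\E[k_\sigma(U,U')(V-V')^2]$, where $(U',V')$ is an independent copy.

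To identify this limit, condition on $(U,U')$. Since $V\mid U$ and $V'\mid U'$ are independent,
\[
\E[(V-V')^2\mid U,U'] = \nu_r(U)+\nu_r(U')+(r(U)-r(U'))^2 .
\]
Integrating against $k_\sigma$ and using symmetry turns $\tfrac12\E[k_\sigma(\nu_r(U)+\nu_r(U'))]$ into $\E[d_\sigma(U)\nu_r(U)]$. The remaining term is exactly $\mathcal D_\sigma[r]$, because $U,U'$ are independent uniforms. Dividing by $\mu_\sigma$ gives $Q_\sigma(U\to V)$, and the reverse direction follows by symmetry.

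\textbf{Rank-based scores.} Replace $U_i$ by $\hat U_i = \operatorname{rank}(X_i)/(n+1) = \hat F_X(X_i)\,n/(n+1)$, and similarly for $V_i$. By Glivenko--Cantelli, $\max_i|\hat U_i-U_i|\to0$ a.s., and the same holds for $V$. The kernel $k_\sigma$ is Lipschitz in each argument, and $(a,b)\mapsto (a-b)^2$ is Lipschitz on $[0,1]^2$. Hence each summand of $E_n/n^2$ and $M_n/n^2$ changes by at most $C_\sigma\max_i(|\hat U_i-U_i|+|\hat V_i-V_i|)$, uniformly in $(i,j)$. So the normalized numerator and denominator have the same limits as the oracle versions, and continuity of the ratio finishes the first claim.

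\textbf{Small-bandwidth limit.} This is the main obstacle, since $\mu_\sigma\to0$ and we need $\mathcal D_\sigma[g]=o(\mu_\sigma)$. Note $\mu_\sigma\asymp\sigma$; more precisely $\mu_\sigma\ge c\sigma$ for small $\sigma$.

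Substituting $t=u+\sigma h$,
\[
\mathcal D_\sigma[g]/\sigma = \tfrac12\int_{\mathbb R} e^{-h^2/2}\int \mathbf 1\{u,u+\sigma h\in[0,1]\}\,(g(u)-g(u+\sigma h))^2\,du\,dh .
\]
The inner integral is at most $\|\tau_{\sigma h}g - g\|_2^2$, where $g$ is extended by $0$ outside $[0,1]$. This tends to $0$ by continuity of translation in $L^2(\mathbb R)$, and it is bounded by $4\|g\|_2^2$. Dominated convergence in $h$ then gives $\mathcal D_\sigma[g]/\sigma\to0$, hence $\mathcal D_\sigma[g]/\mu_\sigma\to0$.

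For the weighted variance term, $w_\sigma(u)=d_\sigma(u)/\mu_\sigma$. Here $d_\sigma(u)/(\sigma\sqrt{2\pi})\to1$ for $u\in(0,1)$ and lies in $[1/2,1]$ for small $\sigma$. Also $\mu_\sigma/(\sigma\sqrt{2\pi})\to1$. So $w_\sigma\to1$ pointwise and is bounded by $2+o(1)$. Since $\nu_r\le1$ is integrable, dominated convergence gives $\E[w_\sigma(U)\nu_r(U)]\to\E\nu_r(U)=\EVar(V\mid U)$. Together with the smoothness limit this yields $Q_\sigma(U\to V)\to\EVar(V\mid U)$, and the same argument gives $Q_\sigma(V\to U)\to\EVar(U\mid V)$.
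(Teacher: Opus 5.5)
Your proposal is correct and follows the paper's proof essentially step for step: the strong law for $U$/$V$-statistics with bounded kernels, the conditional decomposition $\E[(V-V')^2\mid U,U']=\nu_r(U)+\nu_r(U')+(r(U)-r(U'))^2$, the Glivenko--Cantelli plus Lipschitz-kernel comparison for the rank-based scores, and the zero-extension and continuity-of-translation argument with dominated convergence for $\mathcal D_\sigma[g]/\mu_\sigma\to0$ and $w_\sigma\to1$. The one slip is harmless: $d_\sigma(0)/(\sigma\sqrt{2\pi})=\tfrac12-\Phi(-1/\sigma)$ lies slightly below $\tfrac12$, so your range $[1/2,1]$ should be $[1/4,1]$ as in the paper's Lemma~\ref{lem:weights}, but dominated convergence only needs the upper bound on $w_\sigma$.
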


The weights $w_\sigma$ form a probability density on $[0,1]$ that is
smallest at the endpoints and tends to $1$ in the interior as
$\sigma\downarrow0$ (Lemma~\ref{lem:weights}). 
The following results give conditions under which the population rule
selects the causal direction. For non-injective mechanisms
(Theorem~\ref{thm:noninj}), the conditional-variance term determines
the decision, relying on the fact that the cause cannot be recovered from the effect.
For bounded monotone location--scale (heteroscedastic) mechanisms
(Proposition~\ref{prop:hetero}), the smoothness term is of smaller order
in the causal direction as $\sigma\downarrow0$, but both smoothness
terms enter the score difference only beyond first order in $\sigma$.
At first order, the degree weighting rescales the zero-bandwidth
difference and adds a positive boundary term. Hence, at small bandwidth,
the causal direction is selected whenever its expected conditional
variance is not larger.

\begin{theorem}[Non-injective mechanisms]\label{thm:noninj}
Let $Y_\tau=h(f(X)+\tau N)$, $\tau\ge0$, where $X$ and $N$ are
independent with continuous distributions, $f:\mathbb R\to\mathbb R$ is
Borel, $h:\mathbb R\to\mathbb R$ is continuous and strictly increasing,
and $Y_0$ has a continuous distribution. Assume that $f$ is injective
on no Borel set $A$ with $P(X\in A)=1$, and put $U=F_X(X)$,
$V_\tau=F_{Y_\tau}(Y_\tau)$ and $\gamma=\EVar(U\mid Y_0)$. Then
$\gamma>0$, $\EVar(U\mid V_\tau)\ge\gamma$ for every $\tau\ge0$, and
$\EVar(V_\tau\mid U)\to0$ as $\tau\downarrow0$. Consequently, there
exist $\tau_0,\beta_0>0$ such that
$Q_\sigma(U\!\to\!V_\tau)<Q_\sigma(V_\tau\!\to\!U)$ for all
$0<\tau<\tau_0$ and $0<\sigma<\beta_0$, and for each such
$(\tau,\sigma)$ the rank-based rule with bandwidth $\sigma$ selects
$X\to Y_\tau$ almost surely for all sufficiently large $n$.
\end{theorem}

\begin{proposition}[Bounded location--scale mechanisms]\label{prop:hetero}
Let $Y=f(X)+g(X)N$, where $X$ and $N$ are independent and supported on
$[-1,1]$ and $[-\eta,\eta]$, $\eta>0$, with $C^1$ densities that are
strictly positive on these closed intervals. Let $f,g\in C^2([-1,1])$,
$g>0$ and $f'-\eta|g'|\ge\lambda>0$, and put $E_r=\EVar(V\mid U)$,
$E_s=\EVar(U\mid V)$. Then:
(i) there exist $0<c_1\le C_1<\infty$ and $\sigma_1\in(0,1)$ such that
$\mathcal D_\sigma[r]\le C_1\sigma^3$ and
$c_1\sigma^3\log(1/\sigma)\le\mathcal D_\sigma[s]\le
C_1\sigma^3\log(1/\sigma)$ for $0<\sigma<\sigma_1$,
(ii) $\nu_r$ has a continuous version with $\nu_r(0),\nu_r(1)>0$,
$\nu_s$ has a version with $\nu_s(0^+)=\nu_s(1^-)=0$, and, as
$\sigma\downarrow0$,
\[
\begin{aligned}
Q_\sigma(V\!\to\!U)-{}&Q_\sigma(U\!\to\!V)
=\bigl(1+\sigma\sqrt{2/\pi}\bigr)(E_s-E_r)\\
&+\tfrac{\sigma}{\sqrt{2\pi}}\bigl(\nu_r(0)+\nu_r(1)\bigr)+o(\sigma).
\end{aligned}
\]
Hence, if $E_r\le E_s$, then $Q_\sigma(U\!\to\!V)<Q_\sigma(V\!\to\!U)$
for all sufficiently small $\sigma>0$. Monotone additive mechanisms
$Y=f(X)+N$ with $f'\ge\lambda>0$ are the case $g\equiv1$.
\end{proposition}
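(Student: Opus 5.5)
We work entirely in copula coordinates. Write $Y=f(X)+g(X)N$. The condition $f'-\eta|g'|\ge\lambda$ makes $x\mapsto f(x)+g(x)n$ strictly increasing for every $|n|\le\eta$. The support of $(X,Y)$ is therefore the curved band $\{(x,y):\ell(x)\le y\le \overline{\ell}(x)\}$, where $\ell(x)=f(x)-\eta g(x)$ and $\overline{\ell}(x)=f(x)+\eta g(x)$ are strictly increasing $C^2$ boundary curves. The conditional density of $Y$ given $X=x$ is $p_N((y-f(x))/g(x))/g(x)$. It is $C^1$ and bounded below on the band.

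\emph{Regularity of $r$ and $\nu_r$.} Transporting to $(U,V)$ via $F_X,F_Y$, which are $C^2$ and strictly increasing with positive derivatives on the supports, gives $r(u)=\int F_Y(f(x)+g(x)n)p_N(n)\,dn$ with $x=F_X^{-1}(u)$. This function is $C^1$ on $[0,1]$ with bounded derivative. Likewise $\nu_r$ is continuous. At $u=0,1$ the conditional law of $V$ is nondegenerate, since $g>0$ and $\eta>0$, so $\nu_r(0),\nu_r(1)>0$.

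\emph{Bound on $\mathcal D_\sigma[r]$.} For Lipschitz $r$ the substitution $t=u+\sigma z$ gives
\[
\mathcal D_\sigma[r]\le \tfrac12\|r'\|_\infty^2\int\!\!\int k_\sigma(u,t)(u-t)^2\,du\,dt\le C_1\sigma^3 .
\]

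\emph{Behaviour of $s$ and $\nu_s$.} Given $Y=y$, the variable $X$ ranges over $[\overline{\ell}^{-1}(y),\ell^{-1}(y)]\cap[-1,1]$. Near the extreme values $y_{\min}=\ell(-1)$ and $y_{\max}=\overline{\ell}(1)$ this interval shrinks to a point, and its length is of order $y-y_{\min}$. Hence $\nu_s(0^+)=\nu_s(1^-)=0$. Near $v=0$, however, $V\approx c(y-y_{\min})^2$, because the density of $Y$ vanishes linearly at the edge of the band: the triangular corner region has width proportional to $y-y_{\min}$. So $s(v)\approx s(0)+c'\sqrt v$, which gives $s'(v)\sim v^{-1/2}$ near both endpoints. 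In the interior $s$ is $C^1$. The dominant contribution to $\mathcal D_\sigma[s]$ is then
\[
\int_0^1\min(\sigma^2 s'(v)^2,\dots)\,\sigma\,dv\asymp\sigma^3\int_\sigma^1 v^{-1}\,dv=\sigma^3\log(1/\sigma),
\]
with a matching lower bound from the same region. This two-sided estimate, in particular checking that the exponent really is $v^{1/2}$ (from the linear vanishing of the band width combined with the positive $C^1$ densities), is the main obstacle. I would handle it by explicit local coordinates at the corners, splitting $\mathcal D_\sigma[s]$ into $|u-t|\le\sigma^{1-\epsilon}$ and its complement, and taking $v\in(\sigma^2,1)$ as the logarithmic range.

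\emph{Weights and the expansion.} Next I compute $d_\sigma(u)=\sqrt{2\pi}\sigma[\Phi((1-u)/\sigma)-\Phi(-u/\sigma)]$ and $\mu_\sigma=\sqrt{2\pi}\sigma-2\sigma^2(1-e^{-1/(2\sigma^2)})$. This gives
\[
w_\sigma(u)=\bigl(1+\sigma\sqrt{2/\pi}+O(\sigma^2)\bigr)\bigl(1-\bar\Phi(u/\sigma)-\bar\Phi((1-u)/\sigma)\bigr).
\]
For continuous $\nu$,
\[
\E[w_\sigma\nu]=\bigl(1+\sigma\sqrt{2/\pi}\bigr)\E\nu-\sigma\bigl(\nu(0)+\nu(1)\bigr)\int_0^\infty\bar\Phi+o(\sigma),
\]
and $\int_0^\infty\bar\Phi=1/\sqrt{2\pi}$. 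I apply this to $\nu_r$, and to $\nu_s$ with boundary values $0$; this only needs one-sided limits, which is why versions with those limits suffice. The smoothness terms satisfy $\mathcal D_\sigma/\mu_\sigma=O(\sigma^2\log(1/\sigma))=o(\sigma)$ by part (i). Subtracting the two expansions gives the stated formula.

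\emph{Conclusion.} If $E_r\le E_s$, the right-hand side is at least $\frac{\sigma}{\sqrt{2\pi}}(\nu_r(0)+\nu_r(1))+o(\sigma)>0$ for small $\sigma$. Taking $g\equiv1$ gives the additive case.
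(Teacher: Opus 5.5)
\textbf{Overall.} Your outline is the paper's. A Lipschitz forward regression gives $\mathcal D_\sigma[r]=O(\sigma^3)$. The linear vanishing of $p_Y$ at the ends of the effect support gives an inverse-square-root singularity in $s'$, hence a $\sigma^3\log(1/\sigma)$ reverse term. The boundary expansion of the degree weights together with $\mathcal D_\sigma/\mu_\sigma=o(\sigma)$ gives (ii).

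The parts you carry out are correct and match Lemma~\ref{lem:weights}(ii) and Step~5 of the paper: the Lipschitz property of $r$ and the bound on $\mathcal D_\sigma[r]$; $\nu_r(0),\nu_r(1)>0$ and $\nu_s(0^+)=\nu_s(1^-)=0$; the expansion of $w_\sigma$ and of $\E[w_\sigma\nu]$ using only one-sided limits; and the final sign argument.

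The gap is the step you flag yourself, and it is not peripheral. The two-sided bound on $\mathcal D_\sigma[s]$ is the substance of (i). Its upper half is also exactly what you invoke to make the reverse smoothness term $o(\sigma)$ in (ii).

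\textbf{The missing step.} The inference ``$s(v)\approx s(0)+c'\sqrt v$, which gives $s'(v)\sim v^{-1/2}$'' is not valid, because asymptotics of values do not control derivatives. An upper bound $\mathcal D_\sigma[s]\lesssim\sigma^3\log(1/\sigma)$ needs that control: a bounded function with the same value asymptotics but rapid oscillations near $0$ has a far larger Dirichlet form, even one that is not $o(\sigma^2)$. Write $\ell_\pm=f\pm\eta g$ (your $\overline{\ell}$ and $\ell$) and $A=\ell_-(-1)$ (your $y_{\min}$). What must be shown is that $s_Y(y)=\E[U\mid Y=y]$ has a derivative that stays bounded as $y\downarrow A$, and bounded away from zero for a derivative-based lower bound.

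The paper does this with the local corner coordinates you allude to:
\begin{itemize}
\item It rescales the shrinking conditional support $[-1,-1+L(y)]$ by $x=-1+L(y)\xi$ and writes $s_Y=L(y)H(y)$ with $H(y)\to p_X(-1)/2$.
\item It bounds $H'$ via a $C^1$ extension of the joint density with bounded partial derivatives. Hence $L(y)H'(y)\to0$ and $s_Y'(y)\to p_X(-1)/(2\ell_-'(-1))>0$.
\item Then $s'(v)=s_Y'(y)/p_Y(y)$ with $p_Y(y)\asymp y-A\asymp\sqrt v$ yields $c_s v^{-1/2}\le s'(v)\le C_s v^{-1/2}$ near $0$. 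The reflection $(X,N,Y)\mapsto(-X,-N,-Y)$ handles $v\to1$.
\end{itemize}

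Your expression with ``$\min(\sigma^2 s'(v)^2,\dots)$'' must likewise be replaced by an actual estimate. The paper (Lemma~\ref{lem:rates}) proceeds as follows:
\begin{itemize}
\item It writes $\mathcal D_\sigma[g]=\int_0^1 e^{-h^2/(2\sigma^2)}\omega_g(h)\,dh$ with $\omega_g(h)=\int_0^{1-h}(g(t+h)-g(t))^2\,dt$.
\item For the upper bound it uses Cauchy--Schwarz: $\omega_g(h)\le h\int_0^1|g'(x)|^2\min\{h,x,1-x\}\,dx\le C_3h^2\log(e/h)$.
\item For the lower bound it uses $g(t+h)-g(t)\ge ch/\sqrt{t+h}$, which gives $\omega_g(h)\ge c^2h^2\log(v_1/h)$.
\end{itemize}
Your split at $|u-t|\le\sigma^{1-\epsilon}$ could also work, but only once the derivative bounds are in hand. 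The lower bound alone could be obtained from the easy value estimate $s(v)\asymp\sqrt v$, by applying Cauchy--Schwarz to telescoping sums over geometric blocks $[q^{k+1},q^k]$. The upper bound cannot avoid derivative information.

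\textbf{Two smaller inaccuracies}, neither fatal:
\begin{itemize}
\item $F_Y'=p_Y$ vanishes at $A$ and $B$; that is precisely the source of the singularity. So $F_Y$ does not have a positive derivative on the whole support, and it is in general not $C^2$.
\item $s$ is in general not $C^1$ in the interior. $p_Y$, and with it $s_Y$, has kinks at $y=\ell_+(-1)$ and $y=\ell_-(1)$, where the section of the band switches from being cut off by $x=-1$ or $x=1$ to being cut off by a boundary curve. Local Lipschitz continuity in the interior, which is what the paper proves, suffices there.
\end{itemize}
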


\textbf{Relation to regression-error criteria.} By
Proposition~\ref{prop:pop}, the population rule compares degree-weighted
Bayes squared prediction errors plus a smoothness term. As
$\sigma\downarrow0$, it corresponds to a RECI-type
comparison~\cite{Bloebaum2019} with Bayes-optimal regression in copula space.
However, GDECI differs from RECI in three respects. First, through rank
equalization, its decisions are invariant to strictly monotone
transformations of either variable (Section~\ref{sec:method}), whereas RECI
is applied on the original scale. Second, no explicit regression is fitted, since GDECI evaluates $\Delta_n(U\!\to\!V)=v^\top L^Uv/\mathbf 1^\top W^U\mathbf 1$ (see \eqref{eq:gdeci_score}). Third, for
$\sigma>0$, degree weighting and the smoothness term change the population
criterion itself. For the mechanisms of Proposition~\ref{prop:hetero}, the
degree weighting already favours the causal direction at first order in
$\sigma$. Empirically, the positive-bandwidth component improved overall accuracy relative
to an extrapolated estimate of the zero-bandwidth limit, with statistical
significance (Section~\ref{sec:ablation}).

\section{Efficient computation}\label{sec:fft}

Evaluating $E_n$ and $M_n$ directly requires all $n^2$ kernel weights and
hence $O(n^2)$ time. After marginal equalization, however, both scores
can be computed exactly in $O(n\log n)$ time. Let
$u_i=\operatorname{rank}(X_i)/(n+1)$ and
$v_i=\operatorname{rank}(Y_i)/(n+1)$, with ties broken at random, so that
$u=(u_1,\dots,u_n)$ and $v=(v_1,\dots,v_n)$ are permutations of the same
grid $\{1,\dots,n\}/(n+1)$. Consequently, both directions share the
bandwidth $\widehat\sigma=h/(n+1)$ with
$h=m\cdot\operatorname{median}_{k<l}|k-l|$, and the kernel weight between
the observations with $U$-ranks $k$ and $l$ equals $\kappa(k-l)$, where
$\kappa(j)=\exp[-j^2/(2h^2)]$, and thus depends only on the lag $k-l$.
Sorting the observations by their $U$-ranks therefore turns $W^U$ into
the Toeplitz matrix $W^\kappa=(\kappa(k-l))_{k,l=1}^n$, and sorting them
instead by their $V$-ranks turns $W^V$ into the same matrix. In the original observation order, $L^U$ and $L^V$ differ, but $z^\top Lz$
is unchanged when $L$ and $z$ are permuted together, so both energies use
the single Laplacian $L^\kappa=\operatorname{diag}(d)-W^\kappa$ with
$d_k=\sum_{l=1}^n\kappa(k-l)$. The two scores therefore differ only through
the signals. Here $v^U$ is obtained from $v$ by reordering its components by
increasing $U$-rank, and $u^V$ is obtained from $u$ by reordering its
components by increasing $V$-rank.

\begin{proposition}[Efficient computation]\label{prop:fast_computation}
Under marginal equalization, with ties broken at random,
$M_n(U)=M_n(V)=\sum_k d_k$ and
\[
\begin{aligned}
E_n(U\!\to\!V)&=\textstyle\sum_k d_k(v^U_k)^2
 -\sum_{k,l}\kappa(k-l)\,v^U_kv^U_l,\\
E_n(V\!\to\!U)&=\textstyle\sum_k d_k(u^V_k)^2
 -\sum_{k,l}\kappa(k-l)\,u^V_ku^V_l,
\end{aligned}
\]
so the GDECI rule selects $X\to Y$ exactly when
$E_n(U\!\to\!V)<E_n(V\!\to\!U)$. Each double sum equals $z^\top W^\kappa z$
for $z\in\{v^U,u^V\}$, where $W^\kappa z$ is a discrete convolution of $z$ with
$(\kappa(j))_{|j|<n}$, so both energies can be evaluated by fast Fourier
transforms in $O(n\log n)$ time and $O(n)$ memory. 
\end{proposition}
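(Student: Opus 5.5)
The argument is a relabelling, and it proceeds in the order of the statement. Let $\pi$ be the permutation that sorts the observations by increasing $U$-rank, so that $u_{\pi(k)}=k/(n+1)$. Random tie-breaking makes the ranks a genuine permutation of $\{1,\dots,n\}$, which is what makes $\pi$ well defined.

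\emph{Step 1: the kernel is Toeplitz after sorting.} With $\widehat\sigma=h/(n+1)$,
\[
W^U_{\pi(k)\pi(l)}=\exp\!\Big[-\tfrac{((k-l)/(n+1))^2}{2h^2/(n+1)^2}\Big]=\kappa(k-l).
\]
Hence $P^\top W^UP=W^\kappa$ for the permutation matrix $P$ of $\pi$. The same computation with the sorting permutation $\rho$ by $V$-rank gives $R^\top W^VR=W^\kappa$.

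\emph{Step 2: the normalizers coincide.} The quantity $M_n$ is a sum over all entries, so it is invariant under simultaneous row and column permutation. This gives
\[
M_n(U)=\mathbf 1^\top W^\kappa\mathbf 1=\sum_k d_k=M_n(V).
\]
The degrees transform in the same way: $d^U_{\pi(k)}=\sum_l\kappa(k-l)=d_k$, so $P^\top L^UP=L^\kappa$.

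\emph{Step 3: the energies.} Write $v^U=P^\top v$, which has components $v^U_k=v_{\pi(k)}$. Then
\[
E_n(U\to V)=v^\top L^Uv=(P^\top v)^\top L^\kappa(P^\top v)=\sum_k d_k (v^U_k)^2-\sum_{k,l}\kappa(k-l)\,v^U_kv^U_l,
\]
and the symmetric computation with $\rho$ gives the formula for $E_n(V\to U)$. Since both scores are divided by the same positive $M_n$, the comparison $\Delta_n(U\to V)<\Delta_n(V\to U)$ is equivalent to $E_n(U\to V)<E_n(V\to U)$.

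\emph{Step 4: complexity.}
\begin{itemize}
\item Computing ranks and the sorting permutations costs $O(n\log n)$.
\item Each $d_k=\sum_{j=1-k}^{n-k}\kappa(j)$ is a window sum of the sequence $(\kappa(j))_{|j|<n}$. All of them can be read off one prefix-sum array in $O(n)$ time, or alternatively as $W^\kappa\mathbf 1$ by the same convolution used below.
\item The vector $(W^\kappa z)_k=\sum_l\kappa(k-l)z_l$ is the middle segment of the linear convolution of $z$ (length $n$) with the kernel sequence (length $2n-1$). Zero-padding to a length $N\ge 3n-2$, for instance a power of two, turns this into a circular convolution, which FFTs compute in $O(N\log N)=O(n\log n)$ time using $O(n)$ arrays.
\item The remaining inner products are $O(n)$.
\end{itemize}

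The main point needing care, more bookkeeping than a real obstacle, is Step 1. It is precisely the bandwidth identity $\widehat\sigma(u;m)=\widehat\sigma(v;m)=h/(n+1)$ that makes both graphs reduce to the same Toeplitz matrix. This identity holds because the pairwise distances of a permutation of the grid are $|k-l|/(n+1)$, so their median is $\operatorname{median}_{k<l}|k-l|/(n+1)$ in both directions.

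The second point needing care is the index alignment when the full linear convolution is extracted from the circular one. I would state it explicitly: convolve $z$ with $c_j=\kappa(j-(n-1))$ for $j=0,\dots,2n-2$, and take entries $n-1,\dots,2n-2$ of the result.
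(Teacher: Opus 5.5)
Your proposal is correct and follows the paper's proof. Sorting by $U$-rank (resp.\ $V$-rank) conjugates $L^U$ (resp.\ $L^V$) into the common $L^\kappa$ while leaving $M_n$ and the quadratic forms unchanged, and $W^\kappa z$ is then computed by FFT; your zero-padded linear convolution of length $N\ge 3n-2$ is interchangeable with the paper's circulant embedding of length $P\ge 2n-1$. The one item missing from your cost accounting is the bandwidth: lag $j$ occurs exactly $n-j$ times, so the paper obtains the median lag, and hence $h$, in $O(n)$ from these counts, whereas a naive median over all $\binom n2$ pairwise distances would cost $O(n^2)$.
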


The proof is given in Appendix~\ref{app:fast}.

\section{Experiments}\label{sec:experiments}

\subsection{Methods}

We compare GDECI with a set of established methods for bivariate causal
discovery. QCCD~\cite{tagasovska2020qccd} compares quantile-regression
scores motivated by the minimum description length principle.
GRCI~\cite{StroblLasko2023} fits heteroscedastic noise models and
assesses dependence between the inferred noise and the candidate cause.
CAM~\cite{Buhlmann2014} scores causal additive models using a
Gaussian likelihood, whereas RESIT~\cite{Peters2014} combines
nonlinear regression with residual independence testing.
IGCI~\cite{Daniusis2010,Janzing2012} exploits an
assumption of independence of cause distribution
and the causal mechanism. We evaluate variants with uniform and Gaussian
reference measures, denoted by IGCI and $\mathrm{IGCI}_{\mathrm{G}}$,
respectively.
LOCI~\cite{immer2023loci} fits location--scale noise models in both
directions. We use its neural estimator with standardized inputs, 5000
training epochs, and HSIC-based residual independence testing.
RECI~\cite{Bloebaum2018} selects the direction with the smaller
regression error. We evaluate two of its configurations. The first is the default Causal Discovery
Toolbox (CDT)~\cite{Kalainathan2020} implementation, which is included in recent
benchmarks~\cite{chen2026qpe}. The empirical performance of RECI depends
on the regression function class, preprocessing, etc.~\cite{Bloebaum2018}.
With its default degree of three, the toolbox zeroes the linear and
quadratic columns of the design matrix and therefore fits
$y = a+bx^{3}$, a member of the MON family considered
by~\cite{Bloebaum2018}. We additionally evaluate RECI$_{\mathrm{poly}}$, which retains the
toolbox's directional decision rule and $[0,1]$ min--max rescaling,
but uses the polynomial family
$y = \sum_{i=0}^{k}a_i x^{i}$, with
$k \in \{1,\ldots,9\}$, considered by~\cite{Bloebaum2018}.
Finally, QPE-k~\cite{chen2026qpe} estimates quantile partial effects using
kernels and compares how well they satisfy a prescribed basis
representation through a least-squares test.

For the accuracy comparisons, QCCD, GRCI, CAM, IGCI, and RESIT variants
use the released per-pair outputs distributed with the LOCI
repository~\cite{immer2023loci}, rescored on our evaluation pairs.
LOCI, RECI, and QPE-k were run locally.
The GDECI variants and the marginal-only \textsc{EdgeMass} control are
defined in Section~\ref{sec:scores}.

\subsection{Datasets}

We evaluate on 13 established bivariate cause--effect benchmark suites, using
all observations in the versions distributed with the LOCI
repository\footnote{https://github.com/aleximmer/loci/} ~\cite{immer2023loci}.

The four SIM suites originate from Mooij et
al.~\cite{Mooij2016}. Each contains 100 cause--effect pairs with 1000
observations per pair. SIM is the default simulated setting,
SIM-c adds one confounder, SIM-ln uses lower noise levels,
and SIM-G uses a cause distribution that is approximately Gaussian.
Tagasovska et al.~\cite{tagasovska2020qccd} introduced five further synthetic
suites, again with 100 pairs and 1000 observations per pair:
AN and AN-s use additive-noise mechanisms, while
LS and LS-s use location--scale mechanisms, and the ``s''
variants use sigmoidal mechanisms. MN-U uses a sigmoidal mechanism
with multiplicative uniform noise.

The Net, Multi, and Cha rows correspond to the
CE-Net, CE-Multi, and CE-Cha benchmarks used by Goudet et
al.~\cite{goudet2018functional}. In the LOCI distribution each contains 300
pairs. CE-Net uses randomly initialized neural-network mechanisms with varied
cause distributions, whereas CE-Multi uses linear or polynomial mechanisms
with pre- or post-additive and multiplicative noise. CE-Cha is the
continuous-variable subset derived from the ChaLearn Cause--Effect Pairs
Challenge~\cite{guyon2019causeeffect}. Unlike the other synthetic suites, it includes both real and artificial data.

Finally, Tuebingen is the real-world benchmark of
Mooij et al.~\cite{Mooij2016}. We follow the commonly used univariate
continuous protocol and evaluate the 99 eligible pairs~\cite{tagasovska2020qccd}, reporting both unweighted accuracy and accuracy with the supplied pair weights.

\subsection{GDECI scores evaluated}\label{sec:scores}

To assess the effect of marginal equalization, we compare GDECI with
a standardized counterpart that omits the empirical probability integral
transform (PIT) and retains marginal shape information
(Table~\ref{tab:gdeci_scores}). The PIT map $T_{\rm unif}$ assigns normalized
ranks in $\{1,\dots,n\}/(n+1)$, whereas $T_{\rm std}$ centres each
variable and scales it to unit variance. For $x=T(X_1,\dots,X_n)$,
$y=T(Y_1,\dots,Y_n)$ and $\sigma_x=\widehat\sigma(x;m)$, define

\[
\begin{aligned}
W^x_{ij}&=k_{\sigma_x}(x_i,x_j),
&E(x\!\to\!y)&=y^\top L^x y,\\
M(x)&=\sum_{i,j}W^x_{ij},
&L^x&=D^x-W^x,
\end{aligned}
\]
where $D^x$ is the degree matrix, as in Section~\ref{sec:method}.
Reverse scores swap $x$ and $y$; each rule selects the direction
with the smaller score. Bandwidth multipliers were selected by maximizing accuracy in
$300$ cause-effect pairs, generated independently via CDT, and disjoint from every benchmark reported here.

\begin{table}[t]
\centering
\small
\caption{Scoring rules and preprocessing transformations.}
\label{tab:gdeci_scores}
\begin{tabular}{lll}
\toprule
Name & Score & Preprocessing\\
\midrule
$\textsc{GDECI}^{\rm std}_{\rm avg}$
& $E(x\!\to\!y)/M(x)$ & $T_{\rm std}$\\
$\textsc{GDECI}$
& $E(x\!\to\!y)/M(x)$ & $T_{\rm unif}$\\
\textsc{EdgeMass}
& $M(x)$ & $T_{\rm std}$\\
\bottomrule
\end{tabular}
\end{table}

Under $T_{\rm unif}$, both samples are permutations of the same grid,
so $M(x)=M(y)$ and comparing scores reduces to comparing energies.
Under $T_{\rm std}$, the edge masses generally differ, so the
normalization does not cancel. \textsc{EdgeMass} compares the edge
masses alone and therefore uses only marginal information.

\subsection{Benchmark results}
\label{sec:benchmark_results}

We benchmark both accuracy and computational cost. The resulting empirical
profile is summarised in Figure~\ref{fig:empirical_profile}, with inference
time in panel~(a) and accuracy on the synthetic suites and the Tuebingen pairs
in panels~(b) and~(c), respectively. Panel~(d) plots each method's accuracy
against its inference time and shows that no method is both faster and more
accurate than GDECI, on either the synthetic suites or the Tuebingen pairs. 
Per-suite accuracies with 95\% Wilson intervals are
reported in Table~\ref{tab:main}. Throughout, all bandwidths were
frozen before evaluation as described in Section~\ref{sec:scores}.

\paragraph{Accuracy.} Among the investigated methods, GDECI ranked fifth in macro
accuracy on the 12 synthetic suites ($0.776$,
Figure~\ref{fig:empirical_profile}(b)), behind LOCI, QPE-k, GRCI and QCCD.
It was most effective on the additive and location-scale suites and weakest
on Cha, SIM-G and Multi. On the real-world Tuebingen pairs it ranked third
in unweighted accuracy ($0.684$, Figure~\ref{fig:empirical_profile}(c)),
behind GRCI and QCCD, and fifth in weighted accuracy ($0.678$), behind GRCI,
QCCD, RECI and IGCI, although the confidence intervals on these 99 pairs
overlap for most methods. The standardized variant $\textsc{GDECI}^{\rm std}_{\rm avg}$ reached only $0.586$ on the Tuebingen pairs, below \textsc{GDECI}'s $0.684$ (paired
permutation test, $p = 0.041$), whereas on the 12 synthetic suites the two
did not differ significantly ($0.762$ vs.\ $0.776$, paired
permutation test's $p = 0.27$).

\paragraph{RECI configurations.} The two RECI configurations differ substantially. The CDT default
reproduces the monomial rows of~\cite{Bloebaum2018,Bloebaum2019}: on
SIM, SIM-c, SIM-ln and SIM-G it attains $0.44$, $0.53$, $0.44$ and
$0.39$, against the reported monomial ranges $0.42$--$0.46$,
$0.54$--$0.63$, $0.37$--$0.45$ and $0.43$--$0.47$. RECI$_{\mathrm{poly}}$
recovers the accuracies reported there for polynomial regressors of
degree at least two: $0.57$, $0.66$, $0.79$ and $0.59$ on the same four
suites, against the reported ranges $0.52$--$0.57$, $0.65$--$0.68$,
$0.69$--$0.82$ and $0.54$--$0.60$, and $0.656$ on the weighted
Tuebingen pairs against the reported $0.653$--$0.686$ on this benchmark.
Over the twelve non-Tuebingen suites RECI$_{\mathrm{poly}}$ raises the
macro mean from $0.428$ to $0.607$, improving nine suites and worsening
two (McNemar $p<10^{-5}$ over the $1800$ pairs).

\paragraph{Computational cost.} The fast algorithm followed the predicted $O(n\log n)$ rate (Figure~\ref{fig:empirical_profile}(a)), reproduced the direct evaluation, and processed samples of $n=10^6$ in about 0.6\,s. The direct $O(n^2)$ evaluation exceeded the 120\,s budget at $n=10^5$. At $n=10^3$, the size of most benchmark pairs, GDECI was the cheapest method in the comparison. Only RECI, which requires a single least-squares fit per direction, reached comparable speed at $n\approx10^4$, and beyond $n\approx2\times10^4$ it was faster, by a factor of about three at $n=10^6$. LOCI and RESIT exceeded the time budget at $n=10^4$, GRCI at $n\approx2\times10^4$. Taken together, the methods that were more accurate than GDECI on the $12$ synthetic suites were about two to five orders of magnitude slower at $n=10^3$, whereas the methods of comparable cost were less accurate on these suites.

\paragraph{Marginal information.} The comparatively high \textsc{EdgeMass} and
$\mathrm{IGCI}_{\mathrm{G}}$ accuracies on several synthetic suites show that
parts of these benchmarks can be oriented from marginal information alone.
The two agree closely at the suite level ($r=0.99$ across the 12 synthetic
suites), suggesting that both are
largely driven by the same marginal signal. Related observations appear
in~\cite{NEURIPS2021_e987eff4}, where marginal characteristics of simulated
benchmarks are shown to be informative about the causal orientation.

\input{main_table.tex}

\begin{figure}[!t]
  \centering
  \includegraphics[width=0.95\textwidth]{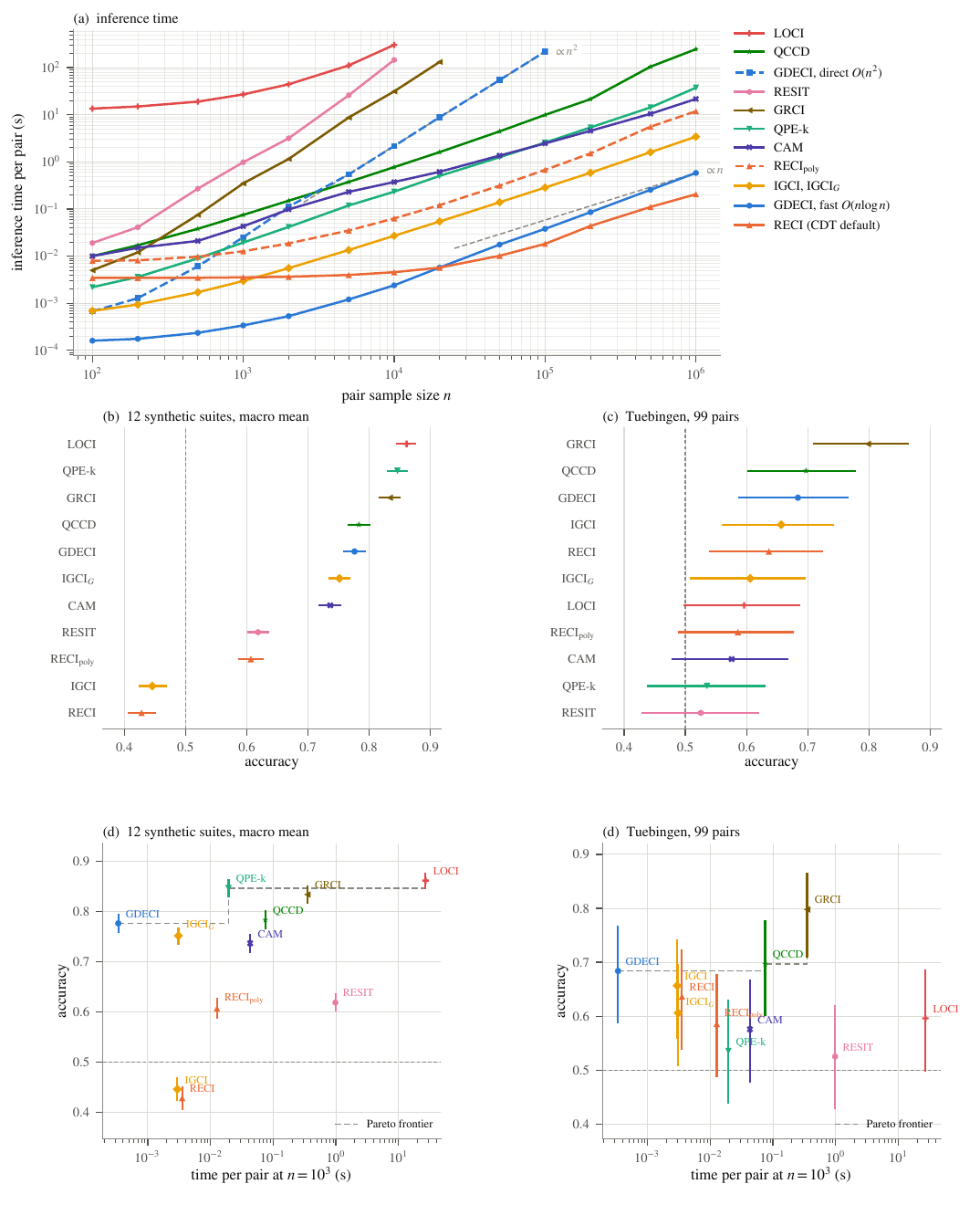}\\[6pt]
  \caption{%
    Empirical profile of GDECI. \textbf{(a):} Time to decide one causal pair, against sample size $n$.
    Each point is the median of seven calls on a single synthetic
    additive-noise pair, using identical inputs for every method.
    Curves terminate where a call first exceeded a \SI{120}{\second} budget or
    exhausted memory. IGCI and IGCI$_G$ share a curve because they differ only
    by an $O(n)$ rescaling. All measurements were taken on one machine
    (Intel Core i7-4790K, \SI{16}{\gibi\byte} RAM). The R
    implementations of QCCD, RESIT, CAM and GRCI were timed within R so that
    interpreter start-up is not charged to them. \textbf{(b)} and \textbf{(c):} accuracies and confidence intervals over benchmarks.
    \textbf{(d):} accuracy from (b) and (c) against time per pair at $n = 10^3$ from (a), with dashed lines marking the Pareto frontier.}
  \label{fig:empirical_profile}
\end{figure}

\subsection{Ablation: does the \texorpdfstring{$\sigma > 0$}{sigma > 0} part carry directional signal?}\label{sec:ablation}

To separate the zero-bandwidth limit of Proposition~\ref{prop:pop} from the
positive-bandwidth part, we score all $1{,}899$ pairs at
$\sigma = s\,\sigma_0$, $s \in \{0.1, 0.2, \dots, 1\}$, where $\sigma_0$ is the
frozen bandwidth of Table~\ref{tab:main}. Causal labels are used only to score
the resulting decisions.

\textbf{Decomposition via a small-bandwidth fit.}
Under the conditions of Lemma~\ref{lem:weights} and
Proposition~\ref{prop:hetero},
$Q_\sigma(U \to V) = \E[w_\sigma(U)\Var(V \mid U)]
+ \mathcal D_\sigma[r]/\mu_\sigma = \EVar(V \mid U) + a\,\sigma + o(\sigma)$.
Since this expansion holds only as $\sigma \to 0$, we fit the GDECI
score~\eqref{eq:gdeci_score} in each direction as $B + a\,\sigma$ over the
five smallest bandwidths ($s \le 0.5$) and keep only the intercept $B$, which
estimates the $\sigma \to 0$ limit $\EVar(V \mid U)$
(Proposition~\ref{prop:pop}). Five points leave three residual degrees of
freedom while staying close enough to $\sigma = 0$ to keep the $o(\sigma)$
remainder small. Having $\Delta_n$ the score at $\sigma_0$, we
compare three decision rules, each choosing the direction with the smaller
value: $B$ (conditional-variance component), $\Delta_n - B$
(positive-bandwidth component) and $\Delta_n$ (full score).

Over all $1{,}899$ pairs, the conditional-variance component alone attained an
accuracy of $0.686$, the positive-bandwidth component alone $0.663$, and the
full score $0.732$ (Figure~\ref{fig:ablation}(a); macro over the 12 synthetic
suites $0.725$, $0.693$ and $0.776$; Tuebingen $0.645$, $0.540$ and $0.684$).
Accuracies are means over ten random tie-breaks. Adding the positive-bandwidth
component corrected $185$ decisions and misdirected $101$, a net gain of $+84$
(McNemar $p = 7.7\times10^{-7}$), and raised pooled
accuracy by $0.045$ (stratified bootstrap 95\% CI $[0.029, 0.062]$). The gain
was positive on $9$ of the $13$ suites, negative on two and zero on two
(one-sided sign test $p = 0.033$). Refitting $B$ on other windows, from
$s \le 0.3$ to $s < 1$, leaves the net gain between $+73$ and $+100$
(Table~\ref{tab:fitwindow}, Appendix~\ref{app:auxilairy_info}).

\textbf{Accuracy along the bandwidth grid.}
As a fit-free check, we score the full GDECI directly at each bandwidth: the
score at $0.1\sigma_0$ is close to the zero-bandwidth limit, and that at
$\sigma_0$ is the full score. Accuracy increased with the bandwidth,
from $0.731$ to $0.776$ macro over the 12 synthetic suites, and from $0.651$ to
$0.684$ on Tuebingen (Figure~\ref{fig:ablation}(b)); both values at $\sigma_0$
match Table~\ref{tab:main}.

\section{Limitations}\label{sec:limitations}

Although GDECI performed competitively across diverse benchmarks,
the provided theoretical guarantees hold only under rather restricted conditions.
These results identify some settings in which the conditional-variance or smoothness term favors the causal direction but do not establish identifiability in general.
In the limit $\sigma \downarrow 0$, taken after $n \to \infty$,
the population rule compares $\mathrm{E}\,\mathrm{Var}(V \mid U)$
with $\mathrm{E}\,\mathrm{Var}(U \mid V)$. Since both marginals
are uniform,
$
\mathrm{E}\,\mathrm{Var}(V \mid U)
=\frac{1-\rho_{U\to V}}{12},
\qquad
\rho_{U\to V}
=\frac{\mathrm{Var}(\mathrm{E}(V|U))}{\mathrm{Var}(V)}.
$
Thus, the limiting rule selects $U\to V$ exactly when
$\rho_{U\to V}>\rho_{V\to U}$, recovering the ordering
of the copula-based directional dependence measures studied by
Sungur~\cite{sungur2005directional}. Sungur studied from the perspective of predictive power, without establishing a causal interpretation. The analogous limiting rule coincides
with RECI~\cite{Bloebaum2019} applied in copula space with
Bayes-optimal regression. 
At $\sigma > 0$, GDECI additionally incorporates degree
weighting and a smoothness term, while remaining fitting-free.
Our ablation shows that positive bandwidth improved macro accuracy
relative to the estimated zero-bandwidth limit, although the gain
was uneven across benchmarks and negative on two of the
12 synthetic suites. Like the bivariate methods reviewed in~\cite{Mooij2016},
GDECI is intended for dependent pairs without unobserved confounding.
In practice, a preliminary independence
test~\cite{ren2024efficiently,daniusis2025measuring}
can help avoid orienting independent pairs, but cannot rule out
unobserved confounding.

\section{Conclusion}\label{sec:conclusion}

We investigated whether asymmetries in graph Dirichlet energy can
reveal the causal direction between two variables. To this end, we
proposed and analysed GDECI, a new fitting-free criterion for bivariate causal
inference that compares the graph Dirichlet energy of each variable,
viewed as a signal on a kernel similarity graph built from the other,
using the unnormalized graph Laplacian. For rank-transformed variables,
the use of the unnormalized Laplacian ensures that its decisions
are invariant under strictly monotone transformations of either
variable, which is not true for the normalized Laplacians.
In population form, GDECI criterion decomposes into a degree-weighted Bayes squared
prediction error and a non-negative smoothness term, both normalized
by the edge mass. As the bandwidth tends to zero, the weights become
uniform and the smoothness term vanishes, recovering a RECI-like
conditional-variance asymmetry in copula space. At positive bandwidth,
the degree-weighted conditional variances and the smoothness terms
of the two directions generally differ, so GDECI can reach a different
decision than a conditional-variance comparison.

\textbf{Theory.} For non-injective mechanisms, we give sufficient conditions under which GDECI selects the causal direction. For additive-noise and heteroscedastic models with bounded supports, in which the cause and noise densities are positive up to the endpoints and
the slope of the effect in the cause is positive, the smoothness term
is of smaller order in the causal direction at sufficiently small bandwidth. Quantitatively, it is at
most of order $\sigma^2$ in the causal direction and of order
$\sigma^2\log(1/\sigma)$ in the anticausal one. The difference arises at the endpoints of the effect's support, where its density vanishes linearly, so that the derivative of the reverse regression function diverges like the inverse square root of the distance to the
copula boundary, whereas the forward regression function is Lipschitz. At first order in the bandwidth, to which the smoothness terms do not contribute, the degree weighting adds a boundary term in favour of the causal direction, because the reverse conditional variance vanishes at the boundary, whereas the forward
one does not. Hence, at small bandwidth, the population rule selects the causal direction whenever its expected conditional variance is not larger. These guarantees take $n\to\infty$ at fixed bandwidth, require the
bandwidth to be sufficiently small, and cover restricted mechanism
classes. They do not establish identifiability in general. On the computational side, marginal equalization reduces the graph operations to convolutions, so GDECI admits an efficient algorithm with
$O(n\log n)$ time and $O(n)$ memory. This makes
it attractive for large samples and for screening many candidate pairs,
for instance before applying slower methods.

\textbf{Empirics.} In the conducted experiments, GDECI showed a favourable trade-off between accuracy and computational cost on the evaluated benchmarks, since no method in the
comparison was both faster and more accurate at $n=10^3$. Among eleven methods, it
ranked fifth in macro accuracy on the 12 synthetic suites, and on
the real-world Tuebingen pairs it ranked third in unweighted and
fifth in weighted accuracy. The four methods that were more accurate on the
synthetic suites were about two to five orders of magnitude slower, and two
of them (LOCI and GRCI) did not reach $n=10^{5}$ within our time and memory budgets.
With the $O(n\log n)$ algorithm, GDECI was the fastest method up to
$n\approx10^{4}$. An extrapolation-based decomposition suggests that the positive-bandwidth
part of the score carries directional information on these pairs:
relative to the conditional-variance component alone, it corrected more
decisions than it misdirected (net $+84$ of $1899$,
$p=7.7\times10^{-7}$), although the gain was not uniform across
benchmarks. Notably, two criteria that infer the direction from the marginals $P_X$ and
$P_Y$ alone, the \textsc{EdgeMass} control and $\mathrm{IGCI}_{\mathrm{G}}$,
reach accuracies of at least $0.92$ on AN, AN-s, LS, LS-s and MN-U, and their
suite-level accuracies correlate at $r=0.99$ across the 12 synthetic suites.
Parts of these benchmarks can thus be oriented from marginal information
alone~\cite{NEURIPS2021_e987eff4}. In contrast, under marginal equalization GDECI
depends on the data only through ranks and hence uses no information from
$P_X$ and $P_Y$. Its accuracy on these suites therefore stems from the dependence between $X$
and $Y$ alone.

Future work includes multivariate extensions and investigation of GDECI-based directionality constraints between latent factors in causal representation learning~\cite{9363924}.

\section{AI Use Statement}\label{sec:declaration_ai}

Claude Opus 5\footnote{https://claude.ai/} was used for writing assistance, mathematical analysis, and code implementation, with all AI-generated material subject to careful manual review, corrections, and modifications.

\section{Acknowledgements}
The author thank Neurotechnology and Vytautas Magnus University for supporting this research. The affiliations reflect the author's institutional associations and do not necessarily indicate institutional participation in this research.

\begingroup
\small
\bibliographystyle{plain}
\bibliography{bibliography}
\endgroup

\clearpage
\section*{Graph Dirichlet Energy for Fitting-Free Bivariate Causal Inference:\
Supplementary Materials}
\appendix

\input{proofs.tex}

\end{document}

%% file: main_table.tex
\begin{table}[!t]\centering\scriptsize\renewcommand{\arraystretch}{0.92}\setlength{\tabcolsep}{4pt}
\caption{Accuracies with $95\%$ Wilson confidence intervals. Panel \textbf{(a)} reflects the published data (LOCI repository), and panel \textbf{(b)} the methods run here together.}
\label{tab:main}

{\footnotesize\textbf{(a) published per-pair outputs}}\\[2pt]
\resizebox{\textwidth}{!}{%
\begin{tabular}{l r rrrrrr}
\toprule
benchmark & $N_{\mathrm{pairs}}$ & QCCD & GRCI & CAM & IGCI & IGCI$_G$ & RESIT\\
\midrule
AN & 100 & $1.000$ {\tiny$[0.96,1.00]$} & $1.000$ {\tiny$[0.96,1.00]$} & $1.000$ {\tiny$[0.96,1.00]$} & $0.200$ {\tiny$[0.13,0.29]$} & $1.000$ {\tiny$[0.96,1.00]$} & $1.000$ {\tiny$[0.96,1.00]$}\\
AN-s & 100 & $0.820$ {\tiny$[0.73,0.88]$} & $0.940$ {\tiny$[0.88,0.97]$} & $1.000$ {\tiny$[0.96,1.00]$} & $0.350$ {\tiny$[0.26,0.45]$} & $1.000$ {\tiny$[0.96,1.00]$} & $1.000$ {\tiny$[0.96,1.00]$}\\
LS & 100 & $1.000$ {\tiny$[0.96,1.00]$} & $0.980$ {\tiny$[0.93,0.99]$} & $1.000$ {\tiny$[0.96,1.00]$} & $0.460$ {\tiny$[0.37,0.56]$} & $0.980$ {\tiny$[0.93,0.99]$} & $0.600$ {\tiny$[0.50,0.69]$}\\
LS-s & 100 & $0.960$ {\tiny$[0.90,0.98]$} & $0.870$ {\tiny$[0.79,0.92]$} & $0.530$ {\tiny$[0.43,0.62]$} & $0.340$ {\tiny$[0.25,0.44]$} & $0.990$ {\tiny$[0.95,1.00]$} & $0.030$ {\tiny$[0.01,0.08]$}\\
MN-U & 100 & $0.990$ {\tiny$[0.95,1.00]$} & $0.880$ {\tiny$[0.80,0.93]$} & $0.860$ {\tiny$[0.78,0.91]$} & $0.110$ {\tiny$[0.06,0.19]$} & $1.000$ {\tiny$[0.96,1.00]$} & $0.050$ {\tiny$[0.02,0.11]$}\\
SIM & 100 & $0.620$ {\tiny$[0.52,0.71]$} & $0.770$ {\tiny$[0.68,0.84]$} & $0.570$ {\tiny$[0.47,0.66]$} & $0.370$ {\tiny$[0.28,0.47]$} & $0.380$ {\tiny$[0.29,0.48]$} & $0.780$ {\tiny$[0.69,0.85]$}\\
SIM-c & 100 & $0.720$ {\tiny$[0.63,0.80]$} & $0.770$ {\tiny$[0.68,0.84]$} & $0.600$ {\tiny$[0.50,0.69]$} & $0.450$ {\tiny$[0.36,0.55]$} & $0.390$ {\tiny$[0.30,0.49]$} & $0.820$ {\tiny$[0.73,0.88]$}\\
SIM-G & 100 & $0.640$ {\tiny$[0.54,0.73]$} & $0.700$ {\tiny$[0.60,0.78]$} & $0.810$ {\tiny$[0.72,0.87]$} & $0.530$ {\tiny$[0.43,0.62]$} & $0.830$ {\tiny$[0.74,0.89]$} & $0.770$ {\tiny$[0.68,0.84]$}\\
SIM-ln & 100 & $0.800$ {\tiny$[0.71,0.87]$} & $0.770$ {\tiny$[0.68,0.84]$} & $0.870$ {\tiny$[0.79,0.92]$} & $0.510$ {\tiny$[0.41,0.61]$} & $0.590$ {\tiny$[0.49,0.68]$} & $0.870$ {\tiny$[0.79,0.92]$}\\
Cha & 300 & $0.537$ {\tiny$[0.48,0.59]$} & $0.700$ {\tiny$[0.65,0.75]$} & $0.467$ {\tiny$[0.41,0.52]$} & $0.550$ {\tiny$[0.49,0.61]$} & $0.580$ {\tiny$[0.52,0.63]$} & $0.343$ {\tiny$[0.29,0.40]$}\\
Multi & 300 & $0.507$ {\tiny$[0.45,0.56]$} & $0.773$ {\tiny$[0.72,0.82]$} & $0.347$ {\tiny$[0.30,0.40]$} & $0.923$ {\tiny$[0.89,0.95]$} & $0.680$ {\tiny$[0.63,0.73]$} & $0.373$ {\tiny$[0.32,0.43]$}\\
Net & 300 & $0.803$ {\tiny$[0.75,0.84]$} & $0.847$ {\tiny$[0.80,0.88]$} & $0.783$ {\tiny$[0.73,0.83]$} & $0.553$ {\tiny$[0.50,0.61]$} & $0.597$ {\tiny$[0.54,0.65]$} & $0.783$ {\tiny$[0.73,0.83]$}\\
Tuebingen & 99 & $0.697$ {\tiny$[0.60,0.78]$} & $0.798$ {\tiny$[0.71,0.87]$} & $0.576$ {\tiny$[0.48,0.67]$} & $0.657$ {\tiny$[0.56,0.74]$} & $0.606$ {\tiny$[0.51,0.70]$} & $0.525$ {\tiny$[0.43,0.62]$}\\
\midrule
\emph{Tuebingen, weighted} & 99 & $0.771$ {\tiny$[0.65,0.86]$} & $0.816$ {\tiny$[0.70,0.90]$} & $0.578$ {\tiny$[0.45,0.70]$} & $0.680$ {\tiny$[0.55,0.79]$} & $0.563$ {\tiny$[0.43,0.68]$} & $0.620$ {\tiny$[0.49,0.74]$}\\
\midrule
\emph{macro mean, 12 non-Tuebingen} & 1800 & $0.783$ {\tiny$[0.76,0.80]$} & $0.833$ {\tiny$[0.81,0.85]$} & $0.736$ {\tiny$[0.72,0.76]$} & $0.446$ {\tiny$[0.42,0.47]$} & $0.751$ {\tiny$[0.73,0.77]$} & $0.618$ {\tiny$[0.60,0.64]$}\\
\bottomrule
\end{tabular}}

\vspace{6pt}

  {\footnotesize\textbf{(b) fully replicated in our study}}\\[2pt]
  \resizebox{\textwidth}{!}{%
  \begin{tabular}{l r rrrr rr r}
  \toprule
  benchmark & $N_{\mathrm{pairs}}$ & \textsc{LOCI} & \textsc{RECI} & RECI$_{\mathrm{poly}}$ & QPE-k & \textsc{GDECI}$^{\mathrm{std}}_{\mathrm{avg}}$ & \textsc{GDECI} & \textsc{EdgeMass}\\
  \midrule
  AN & 100 & $1.000$ {\tiny$[0.96,1.00]$} & $0.180$ {\tiny$[0.12,0.27]$} & $0.920$ {\tiny$[0.85,0.96]$} & $0.990$ {\tiny$[0.95,1.00]$} & $1.000$ {\tiny$[0.96,1.00]$} & $1.000$ {\tiny$[0.96,1.00]$} & $0.940$ {\tiny$[0.88,0.97]$}\\
  AN-s & 100 & $1.000$ {\tiny$[0.96,1.00]$} & $0.350$ {\tiny$[0.26,0.45]$} & $0.610$ {\tiny$[0.51,0.70]$} & $0.880$ {\tiny$[0.80,0.93]$} & $1.000$ {\tiny$[0.96,1.00]$} & $0.830$ {\tiny$[0.74,0.89]$} & $0.960$ {\tiny$[0.90,0.98]$}\\
  LS & 100 & $0.940$ {\tiny$[0.88,0.97]$} & $0.220$ {\tiny$[0.15,0.31]$} & $0.850$ {\tiny$[0.77,0.91]$} & $1.000$ {\tiny$[0.96,1.00]$} & $1.000$ {\tiny$[0.96,1.00]$} & $1.000$ {\tiny$[0.96,1.00]$} & $0.920$ {\tiny$[0.85,0.96]$}\\
  LS-s & 100 & $0.890$ {\tiny$[0.81,0.94]$} & $0.440$ {\tiny$[0.35,0.54]$} & $0.080$ {\tiny$[0.04,0.15]$} & $0.780$ {\tiny$[0.69,0.85]$} & $0.670$ {\tiny$[0.57,0.75]$} & $0.840$ {\tiny$[0.76,0.90]$} & $0.950$ {\tiny$[0.89,0.98]$}\\
  MN-U & 100 & $1.000$ {\tiny$[0.96,1.00]$} & $0.130$ {\tiny$[0.08,0.21]$} & $0.040$ {\tiny$[0.02,0.10]$} & $1.000$ {\tiny$[0.96,1.00]$} & $0.990$ {\tiny$[0.95,1.00]$} & $1.000$ {\tiny$[0.96,1.00]$} & $0.990$ {\tiny$[0.95,1.00]$}\\
  SIM & 100 & $0.780$ {\tiny$[0.69,0.85]$} & $0.440$ {\tiny$[0.35,0.54]$} & $0.570$ {\tiny$[0.47,0.66]$} & $0.830$ {\tiny$[0.74,0.89]$} & $0.510$ {\tiny$[0.41,0.61]$} & $0.650$ {\tiny$[0.55,0.74]$} & $0.350$ {\tiny$[0.26,0.45]$}\\
  SIM-c & 100 & $0.820$ {\tiny$[0.73,0.88]$} & $0.530$ {\tiny$[0.43,0.62]$} & $0.660$ {\tiny$[0.56,0.75]$} & $0.790$ {\tiny$[0.70,0.86]$} & $0.580$ {\tiny$[0.48,0.67]$} & $0.690$ {\tiny$[0.59,0.77]$} & $0.360$ {\tiny$[0.27,0.46]$}\\
  SIM-G & 100 & $0.780$ {\tiny$[0.69,0.85]$} & $0.390$ {\tiny$[0.30,0.49]$} & $0.590$ {\tiny$[0.49,0.68]$} & $0.830$ {\tiny$[0.74,0.89]$} & $0.750$ {\tiny$[0.66,0.82]$} & $0.560$ {\tiny$[0.46,0.65]$} & $0.850$ {\tiny$[0.77,0.91]$}\\
  SIM-ln & 100 & $0.740$ {\tiny$[0.65,0.82]$} & $0.440$ {\tiny$[0.35,0.54]$} & $0.790$ {\tiny$[0.70,0.86]$} & $0.680$ {\tiny$[0.58,0.76]$} & $0.810$ {\tiny$[0.72,0.87]$} & $0.790$ {\tiny$[0.70,0.86]$} & $0.510$ {\tiny$[0.41,0.61]$}\\
  Cha & 300 & $0.727$ {\tiny$[0.67,0.77]$} & $0.560$ {\tiny$[0.50,0.62]$} & $0.583$ {\tiny$[0.53,0.64]$} & $0.597$ {\tiny$[0.54,0.65]$} & $0.437$ {\tiny$[0.38,0.49]$} & $0.499$ {\tiny$[0.44,0.56]$} & $0.583$ {\tiny$[0.53,0.64]$}\\
  Multi & 300 & $0.790$ {\tiny$[0.74,0.83]$} & $0.853$ {\tiny$[0.81,0.89]$} & $0.853$ {\tiny$[0.81,0.89]$} & $0.883$ {\tiny$[0.84,0.91]$} & $0.647$ {\tiny$[0.59,0.70]$} & $0.607$ {\tiny$[0.55,0.66]$} & $0.683$ {\tiny$[0.63,0.73]$}\\
  Net & 300 & $0.867$ {\tiny$[0.82,0.90]$} & $0.603$ {\tiny$[0.55,0.66]$} & $0.733$ {\tiny$[0.68,0.78]$} & $0.893$ {\tiny$[0.85,0.92]$} & $0.750$ {\tiny$[0.70,0.80]$} & $0.847$ {\tiny$[0.80,0.88]$} & $0.557$ {\tiny$[0.50,0.61]$}\\
  Tuebingen & 99 & $0.596$ {\tiny$[0.50,0.69]$} & $0.636$ {\tiny$[0.54,0.72]$} & $0.586$ {\tiny$[0.49,0.68]$} & $0.535$ {\tiny$[0.44,0.63]$} & $0.586$ {\tiny$[0.49,0.68]$} & $0.684$ {\tiny$[0.59,0.77]$} & $0.263$ {\tiny$[0.19,0.36]$}\\
  \midrule
  \emph{Tuebingen, weighted} & 99 & $0.621$ {\tiny$[0.49,0.74]$} & $0.736$ {\tiny$[0.61,0.83]$} & $0.656$ {\tiny$[0.53,0.77]$} & $0.613$ {\tiny$[0.48,0.73]$} & $0.562$ {\tiny$[0.43,0.68]$} & $0.678$ {\tiny$[0.55,0.78]$} & $0.345$ {\tiny$[0.23,0.47]$}\\
  \midrule
  \emph{macro mean, 12 non-Tuebingen} & 1800 & $0.861$ {\tiny$[0.84,0.88]$} & $0.428$ {\tiny$[0.40,0.45]$} & $0.607$ {\tiny$[0.59,0.63]$} & $0.846$ {\tiny$[0.83,0.86]$} & $0.762$ {\tiny$[0.74,0.78]$} & $0.776$ {\tiny$[0.76,0.79]$} & $0.721$ {\tiny$[0.70,0.74]$}\\
  \bottomrule
  \end{tabular}}
\vspace{3pt}
\end{table}

%% file: proofs.tex
\providecommand{\E}{\mathbb{E}}
\providecommand{\Var}{\operatorname{Var}}
\providecommand{\EVar}{\E\Var}
\providecommand{\R}{\mathbb{R}}

\section{Invariance under monotone transformations.}\label{app:invariance_appendix}
Let $u,v$ be the rank vectors of $X,Y$ (no ties). For $z\in\{u,v\}$ put
$W^z_{ij}=k_\sigma(z_i,z_j)$, $L^z=\operatorname{diag}(W^z\mathbf 1)-W^z$ and
$M(z)=\mathbf 1^\top W^z\mathbf 1$, with $\sigma$ fixed or median-based. Since
$u$ and $v$ are permutations of the same grid, $M(u)=M(v)$, and the rule
compares $E(u\!\to\!v)=v^\top L^uv$ with $E(v\!\to\!u)=u^\top L^vu$.
A strictly increasing map leaves ranks unchanged, and a strictly decreasing one
maps $z\mapsto\mathbf 1-z$. Two facts suffice:
\begin{itemize}
\item[(F1)] $L^z$ is symmetric with $L^z\mathbf 1=0$, so
$(\mathbf 1-x)^\top L^z(\mathbf 1-x)=x^\top L^zx$;
\item[(F2)] $|(1-z_i)-(1-z_j)|=|z_i-z_j|$, so $L^{\mathbf 1-z}=L^z$ and
$M(\mathbf 1-z)=M(z)$.
\end{itemize}
\begin{center}
\begin{tabular}{llll}
\hline
$X$ & $Y$ & $E(u\!\to\!v)$ becomes & $E(v\!\to\!u)$ becomes\\
\hline
incr. & incr. & $v^\top L^uv$ & $u^\top L^vu$\\
incr. & decr. & $(\mathbf 1-v)^\top L^u(\mathbf 1-v)\stackrel{\rm F1}{=}v^\top L^uv$
& $u^\top L^{\mathbf 1-v}u\stackrel{\rm F2}{=}u^\top L^vu$\\
decr. & incr. & $v^\top L^{\mathbf 1-u}v\stackrel{\rm F2}{=}v^\top L^uv$
& $(\mathbf 1-u)^\top L^v(\mathbf 1-u)\stackrel{\rm F1}{=}u^\top L^vu$\\
decr. & decr. & $(\mathbf 1-v)^\top L^{\mathbf 1-u}(\mathbf 1-v)\stackrel{\rm F1,F2}{=}v^\top L^uv$
& $(\mathbf 1-u)^\top L^{\mathbf 1-v}(\mathbf 1-u)\stackrel{\rm F1,F2}{=}u^\top L^vu$\\
\hline
\end{tabular}
\end{center}

In all four cases both energies, and hence the decision, are unchanged.
For normalized Dirichlet energy based on $L_{\mathrm{norm}} = I-D^{-1/2}WD^{-1/2}$, property (F1) generally fails.
By Proposition~\ref{prop:pop}, invariance of both empirical scores also implies invariance of their population limits.

\section{Population limit and partial identifiability}\label{app:prop1}

Let $(U_i,V_i)_{i\ge1}$ be i.i.d.\ copies of $(U,V)$ with both marginals
uniform on $[0,1]$. Since $U$
is uniform, expectations of functions of $U$ are integrals over $[0,1]$:
$\mathbb E[g(U)]=\int_0^1 g(u)\,du$. For $\sigma>0$, define
\[
k_\sigma(u,t):=\exp\!\left(-\frac{(u-t)^2}{2\sigma^2}\right),
\qquad
d_\sigma(u):=\int_0^1 k_\sigma(u,t)\,dt,
\qquad
\mu_\sigma:=\int_0^1 d_\sigma(u)\,du.
\]
For a sample of size $n$, write
\[
W_{ij}:=k_\sigma(U_i,U_j),\qquad
d_i:=\sum_{j=1}^n W_{ij},\qquad
D:=\operatorname{diag}(d_1,\ldots,d_n),\qquad
v:=(V_1,\ldots,V_n)^\top,
\]
and set
{\color{black}
\[
\begin{aligned}
L&:=D-W,
& E_n&:=v^\top Lv=\frac12\sum_{i,j=1}^n W_{ij}(V_i-V_j)^2,\\
M_n&:=\sum_{i,j=1}^n W_{ij},
& \Delta_n(U\!\to\!V)&:=E_n/M_n,
\end{aligned}
\]
}%
where $W=(W_{ij})_{i,j=1}^n$, and the
diagonal entries $W_{ii}=1$ are included{\color{black}; they contribute to
$M_n$ but not to $E_n$}. The reverse quantities are obtained
by interchanging $U$ and $V$.
{\color{black}The second expression for $E_n$ follows from $W_{ij}=W_{ji}$ and
$d_i=\sum_jW_{ij}$, since
$\sum_i d_iV_i^2-\sum_{i,j}W_{ij}V_iV_j
=\frac12\sum_{i,j}W_{ij}(V_i^2+V_j^2-2V_iV_j)$. In particular $L$ is symmetric
and $L\mathbf 1=0$.}

Let $F(t)=t$ on $[0,1]$ be the uniform distribution function, and let $F_n$
and $H_n$ be the empirical distribution functions of $(U_i)_{i=1}^n$ and
$(V_i)_{i=1}^n$.

Write $r(u):=\mathbb E[V\mid U=u]$ and $s(t):=\mathbb E[U\mid V=t]$ for the
two regression functions{\color{black}, and
$\nu_r(u):=\E[(V-r(U))^2\mid U=u]$ and $\nu_s(t):=\E[(U-s(V))^2\mid V=t]$ for
the two conditional variances, so that
$\EVar(V\mid U)=\int_0^1\nu_r(u)\,du$ and $0\le\nu_r,\nu_s\le1$}. Let
$\langle g,h\rangle:=\int_0^1 g(u)h(u)\,du$ and
$\|g\|_2^2:=\langle g,g\rangle$, and let $\|\cdot\|_\infty$ denote the
supremum norm. For real-valued $g\in L^2[0,1]$, define the
Dirichlet form and the kernel operator
{\color{black}
\[
\mathcal D_\sigma[g]:=\frac12\int_0^1\!\!\int_0^1 k_\sigma(u,t)
\bigl(g(u)-g(t)\bigr)^2\,du\,dt,
\qquad
(K_\sigma g)(u):=\int_0^1 k_\sigma(u,t)g(t)\,dt.
\]
Expanding the square and using the symmetry of $k_\sigma$ and
$\int_0^1k_\sigma(u,t)\,dt=d_\sigma(u)$,
\begin{equation}\label{eq:proof-form}
\mathcal D_\sigma[g]=\int_0^1 d_\sigma(u)g(u)^2\,du-\langle g,K_\sigma g\rangle,
\end{equation}
which is the population counterpart of $v^\top Lv=\sum_id_iV_i^2-v^\top Wv$.
Since $\mathcal D_\sigma[g]$ depends on $g$ only through the differences
$g(u)-g(t)$, we have $\mathcal D_\sigma[a+bg]=b^2\mathcal D_\sigma[g]$ for all
constants $a,b$.
}

Since $|u-t|\le1$ on $[0,1]^2$, the constant
$c_\sigma:=\exp[-1/(2\sigma^2)]$ satisfies, for every $u,t$,
{\color{black}
\begin{equation}\label{eq:proof-bounds}
c_\sigma\le k_\sigma(u,t)\le1,\qquad
c_\sigma\le d_\sigma(u)\le1,\qquad
c_\sigma\le \mu_\sigma\le1 .
\end{equation}
}


{\color{black}
\paragraph{Degree profile.}
Let $\Phi$ and $\phi$ be the standard Gaussian distribution function and
density, and put
\[
\begin{gathered}
q_\sigma(z):=\frac{e^{-z^2/(2\sigma^2)}}{\sigma\sqrt{2\pi}},
\qquad
\bar\psi_\sigma:=\int_0^1\psi_\sigma(u)\,du,\\
\psi_\sigma(u):=\int_0^1 q_\sigma(u-t)\,dt
=\Phi\Bigl(\frac u\sigma\Bigr)-\Phi\Bigl(\frac{u-1}\sigma\Bigr).
\end{gathered}
\]
Then $k_\sigma(u,t)=\sigma\sqrt{2\pi}\,q_\sigma(u-t)$,
$d_\sigma=\sigma\sqrt{2\pi}\,\psi_\sigma$ and
$\mu_\sigma=\sigma\sqrt{2\pi}\,\bar\psi_\sigma$. We call
\[
w_\sigma:=\frac{d_\sigma}{\mu_\sigma}=\frac{\psi_\sigma}{\bar\psi_\sigma}
\]
the degree weights. With this notation, the scores of
Proposition~\ref{prop:pop} below read
$\E[w_\sigma(U)\nu_r(U)]+\mathcal D_\sigma[r]/\mu_\sigma$ and
$\E[w_\sigma(V)\nu_s(V)]+\mathcal D_\sigma[s]/\mu_\sigma$.

\begin{lemma}[Degree weights]\label{lem:weights}
\emph{(i)} For every $\sigma>0$, $w_\sigma$ is a probability density on
$[0,1]$ with $w_\sigma(1-u)=w_\sigma(u)$. For $0<\sigma\le1$ and every $u\in[0,1]$,
\[
\tfrac14\le\psi_\sigma(u)\le1,\qquad
\tfrac14\le\bar\psi_\sigma\le1,\qquad
\tfrac14\le w_\sigma(u)\le4\text{,}
\]
and $w_\sigma(u)\to1$ for every $u\in(0,1)$ as $\sigma\downarrow0$. Hence
$\int_0^1w_\sigma(u)\nu(u)\,du\to\int_0^1\nu(u)\,du$ for every bounded
measurable $\nu$.

\emph{(ii)} If $\nu$ is bounded and measurable on $[0,1]$ and has one-sided
limits $\nu(0^+)$ and $\nu(1^-)$, then, as $\sigma\downarrow0$,
\[
\int_0^1 w_\sigma(u)\nu(u)\,du
=\int_0^1\nu(u)\,du
+\frac{\sigma}{\sqrt{2\pi}}
\Bigl(2\int_0^1\nu(u)\,du-\nu(0^+)-\nu(1^-)\Bigr)+o(\sigma).
\]
\end{lemma}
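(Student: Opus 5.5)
Part (i) follows from the explicit formula $\psi_\sigma(u)=\Phi(u/\sigma)-\Phi((u-1)/\sigma)$. Since $w_\sigma=\psi_\sigma/\bar\psi_\sigma$ with $\bar\psi_\sigma=\int_0^1\psi_\sigma$, it is nonnegative and integrates to one. The symmetry $w_\sigma(1-u)=w_\sigma(u)$ follows from $\Phi(-z)=1-\Phi(z)$, since $\psi_\sigma(1-u)=\Phi((1-u)/\sigma)-\Phi(-u/\sigma)=\Phi(u/\sigma)-\Phi((u-1)/\sigma)$. The upper bound $\psi_\sigma\le1$ is trivial. For the lower bound, $\psi_\sigma$ is symmetric about $1/2$ and, because $\psi_\sigma'(u)=\sigma^{-1}(\phi(u/\sigma)-\phi((u-1)/\sigma))\ge0$ for $u\le1/2$, it is increasing on $[0,1/2]$. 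Hence its minimum is $\psi_\sigma(0)=\Phi(0)-\Phi(-1/\sigma)=\Phi(1/\sigma)-\tfrac12\ge\Phi(1)-\tfrac12>\tfrac14$ for $\sigma\le1$, since $\Phi(1)\approx0.841$. Integrating gives $\tfrac14\le\bar\psi_\sigma\le1$, and the ratio bounds $\tfrac14\le w_\sigma\le4$ follow. For fixed $u\in(0,1)$, $\Phi(u/\sigma)\to1$ and $\Phi((u-1)/\sigma)\to0$, so $\psi_\sigma(u)\to1$. Dominated convergence (bound $1$) gives $\bar\psi_\sigma\to1$, hence $w_\sigma(u)\to1$. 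Dominated convergence again, with bound $4\|\nu\|_\infty$, gives $\int w_\sigma\nu\to\int\nu$.

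For part (ii), first compute $\bar\psi_\sigma$ exactly. Write $1-\psi_\sigma(u)=\Phi(-u/\sigma)+\Phi((u-1)/\sigma)$. Substituting $z=u/\sigma$ and $z=(1-u)/\sigma$ gives
\[
1-\bar\psi_\sigma=2\sigma\int_0^{1/\sigma}\Phi(-z)\,dz=2\sigma\Bigl(\tfrac{1}{\sqrt{2\pi}}-R_\sigma\Bigr),
\]
using $\int_0^\infty\Phi(-z)\,dz=\E[Z^+]=1/\sqrt{2\pi}$. The tail $R_\sigma=\int_{1/\sigma}^\infty\Phi(-z)\,dz$ is exponentially small. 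Hence $\bar\psi_\sigma=1-\sigma\sqrt{2/\pi}+o(\sigma)$ and $1/\bar\psi_\sigma=1+\sigma\sqrt{2/\pi}+o(\sigma)$.

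Next expand the numerator:
\[
\int_0^1\psi_\sigma\nu=\int_0^1\nu-\int_0^1\Phi(-u/\sigma)\nu(u)\,du-\int_0^1\Phi((u-1)/\sigma)\nu(u)\,du.
\]
In the first boundary integral substitute $u=\sigma z$ to get $\sigma\int_0^{1/\sigma}\Phi(-z)\nu(\sigma z)\,dz$. The integrand is dominated by $\|\nu\|_\infty\Phi(-z)$, which is integrable on $[0,\infty)$, and $\nu(\sigma z)\to\nu(0^+)$ pointwise for $z>0$. Dominated convergence therefore gives $\sigma\bigl(\nu(0^+)/\sqrt{2\pi}+o(1)\bigr)$. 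The boundary at $1$ is handled the same way, with $\nu(1^-)$. Hence
\[
\int_0^1\psi_\sigma\nu=\int_0^1\nu-\frac{\sigma}{\sqrt{2\pi}}\bigl(\nu(0^+)+\nu(1^-)\bigr)+o(\sigma).
\]
Multiplying this by $1/\bar\psi_\sigma=1+\sigma\sqrt{2/\pi}+o(\sigma)$ and noting $\sqrt{2/\pi}=2/\sqrt{2\pi}$ gives the stated expansion. The cross terms are $O(\sigma^2)$ because $\nu$ is bounded.

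The main point requiring care is the boundary-layer step, where $o(\sigma)$ must hold without any continuity of $\nu$ beyond the one-sided limits. The dominated-convergence argument after rescaling handles this, since the integrable majorant $\Phi(-z)$ does not depend on $\sigma$. The constant check $\Phi(1)-\tfrac12>\tfrac14$ and the exponentially small tails are routine.
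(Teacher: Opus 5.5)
Your proof is correct and follows essentially the same route as the paper. You get the bounds in (i) from the monotonicity of $\psi_\sigma$ on $[0,\tfrac12]$, and you prove (ii) by the boundary-layer rescalings $u=\sigma z$, $u=1-\sigma z$ with dominated convergence. The only cosmetic difference is that you expand $\bar\psi_\sigma$ directly with an exponentially small tail, whereas the paper obtains it as the case $\nu\equiv1$ of the numerator expansion.
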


\begin{proof}
(i) $\int_0^1w_\sigma(u)\,du=1$ by definition of $\mu_\sigma$, and the
symmetry $\Phi(-z)=1-\Phi(z)$ gives $\psi_\sigma(1-u)=\psi_\sigma(u)$.
Moreover,
$\psi_\sigma'(u)=\sigma^{-1}\bigl[\phi(u/\sigma)-\phi((1-u)/\sigma)\bigr]\ge0$
for $u\le\frac12$, so $\psi_\sigma$ attains its minimum at the endpoints:
$\psi_\sigma\ge\psi_\sigma(0)=\frac12-\Phi(-1/\sigma)\ge\frac12-\Phi(-1)>\frac14$
for $\sigma\le1$. Clearly $\psi_\sigma\le1$, hence
$\frac14\le\bar\psi_\sigma\le1$ and $\frac14\le w_\sigma\le4$. For
$u\in(0,1)$, $\psi_\sigma(u)\to1$, so $\bar\psi_\sigma\to1$ by dominated
convergence and $w_\sigma(u)\to1$. The last claim follows by dominated
convergence.
{\color{black}
(ii) Since $1-\psi_\sigma(u)=\Phi(-u/\sigma)+\Phi(-(1-u)/\sigma)$, and, by
the substitutions $u=\sigma x$ and $u=1-\sigma x$ and dominated
convergence, as $\sigma\downarrow0$,
\[
\begin{aligned}
\frac1\sigma\int_0^1\Phi\Bigl(-\frac u\sigma\Bigr)\nu(u)\,du
&=\int_0^{1/\sigma}\Phi(-x)\,\nu(\sigma x)\,dx
\to\nu(0^+)\int_0^\infty\Phi(-x)\,dx,\\
\frac1\sigma\int_0^1\Phi\Bigl(-\frac{1-u}\sigma\Bigr)\nu(u)\,du
&=\int_0^{1/\sigma}\Phi(-x)\,\nu(1-\sigma x)\,dx
\to\nu(1^-)\int_0^\infty\Phi(-x)\,dx,
\end{aligned}
\]
where $\int_0^\infty\Phi(-x)\,dx=\phi(0)=1/\sqrt{2\pi}$, we get
\[
\int_0^1\psi_\sigma(u)\nu(u)\,du
=\int_0^1\nu(u)\,du-\frac{\sigma}{\sqrt{2\pi}}
\bigl(\nu(0^+)+\nu(1^-)\bigr)+o(\sigma).
\]
The case $\nu\equiv1$ gives $\bar\psi_\sigma=1-\sigma\sqrt{2/\pi}+o(\sigma)$,
and dividing the two expansions gives the claim, because
$\sqrt{2/\pi}=2/\sqrt{2\pi}$. 
}
\end{proof}

\begingroup
\renewcommand{\theproposition}{\ref{prop:pop}}%
\ifdefined\theHproposition
  \renewcommand{\theHproposition}{prop-pop-restated}%
\fi
\begin{proposition}[Population form]
For every fixed $\sigma>0$, almost surely as $n\to\infty$,
\[
\begin{aligned}
\Delta_n(U\!\to\!V)\to Q_\sigma(U\!\to\!V)
&:=\E[w_\sigma(U)\nu_r(U)]+\mathcal D_\sigma[r]/\mu_\sigma,\\
\Delta_n(V\!\to\!U)\to Q_\sigma(V\!\to\!U)
&:=\E[w_\sigma(V)\nu_s(V)]+\mathcal D_\sigma[s]/\mu_\sigma,
\end{aligned}
\]
and the same limits hold for the rank-based scores. Moreover, as
$\sigma\downarrow0$, $\mathcal D_\sigma[g]/\mu_\sigma\to0$ for every
$g\in L^2[0,1]$, $Q_\sigma(U\!\to\!V)\to\EVar(V\mid U)$ and
$Q_\sigma(V\!\to\!U)\to\EVar(U\mid V)$.
\end{proposition}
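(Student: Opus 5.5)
Everything runs through V-statistics. First, the i.i.d.\ case. Write $M_n/n^2=\frac1{n^2}\sum_{i,j}k_\sigma(U_i,U_j)$ and $E_n/n^2=\frac1{n^2}\sum_{i,j}\frac12k_\sigma(U_i,U_j)(V_i-V_j)^2$. Both are V-statistics with bounded symmetric kernels of $(U_i,V_i)$, with the kernel bounded by $1$ by \eqref{eq:proof-bounds}. The diagonal terms contribute $O(1/n)$, and the strong law for U-statistics (Hoeffding) gives almost-sure convergence to the corresponding expectations over independent copies $(U,V),(U',V')$. The first limit is $\E k_\sigma(U,U')=\mu_\sigma>0$.

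For the second limit, condition on $U,U'$ and use $\E[(V-V')^2\mid U,U']=\nu_r(U)+\nu_r(U')+(r(U)-r(U'))^2$. Symmetry of $k_\sigma$ and the definition of $d_\sigma$ then give
\[
\tfrac12\E\bigl[k_\sigma(U,U')(V-V')^2\bigr]=\int_0^1d_\sigma(u)\nu_r(u)\,du+\mathcal D_\sigma[r].
\]
Dividing by $\mu_\sigma$ and using $w_\sigma=d_\sigma/\mu_\sigma$ yields $Q_\sigma(U\!\to\!V)$. The reverse direction is identical with $U$ and $V$ swapped.

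\textbf{Rank-based scores.} Here $\hat U_i=nF_n(U_i)/(n+1)$ and $\hat V_i=nH_n(V_i)/(n+1)$. By Glivenko--Cantelli, $\max_i|\hat U_i-U_i|\le\|F_n-F\|_\infty+1/(n+1)\to0$ almost surely, and similarly for $V$. The kernel $(a,b,c,e)\mapsto k_\sigma(a,b)(c-e)^2$ is Lipschitz on $[0,1]^4$, with constant depending only on $\sigma$ (since $|\partial_a k_\sigma|\le 1/\sigma$). So each normalized sum changes by at most $C_\sigma\max_i(|\hat U_i-U_i|+|\hat V_i-V_i|)\to0$. Because $M_n/n^2\to\mu_\sigma>0$, the ratio converges to the same limit. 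I expect this perturbation step to be routine; the main care is only that the limit denominator is bounded away from zero.

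\textbf{Small-bandwidth limit.} Lemma~\ref{lem:weights}(i) already gives $\E[w_\sigma(U)\nu_r(U)]\to\int\nu_r=\EVar(V\mid U)$, since $0\le\nu_r\le1$. It therefore remains to show $\mathcal D_\sigma[g]/\mu_\sigma\to0$ for every $g\in L^2$; I expect this to be the main obstacle, since $g$ has no regularity. Using $\mu_\sigma=\sigma\sqrt{2\pi}\,\bar\psi_\sigma$ with $\bar\psi_\sigma\ge\frac14$, it suffices to show $\sigma^{-1}\mathcal D_\sigma[g]\to0$. Writing $k_\sigma=\sigma\sqrt{2\pi}\,q_\sigma$ and substituting $t=u+\sigma z$,
\[
\sigma^{-1}\mathcal D_\sigma[g]=\tfrac{\sqrt{2\pi}}2\int_{\mathbb R}\phi(z)\int \mathbf 1\{u,u+\sigma z\in[0,1]\}\bigl(g(u)-g(u+\sigma z)\bigr)^2du\,dz .
\]
Extend $g$ by zero outside $[0,1]$. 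The inner integral is then at most $\|\tau_{\sigma z}\tilde g-\tilde g\|_2^2$, which is bounded by $4\|g\|_2^2$ and tends to $0$ for each fixed $z$ by continuity of translation in $L^2(\mathbb R)$. Dominated convergence in $z$ then finishes the argument. Applying this to $g=r$ and $g=s$, both of which are bounded and hence in $L^2$, gives both limits of $Q_\sigma$.
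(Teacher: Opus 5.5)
Your proposal is correct and follows essentially the same route as the paper. It uses the strong law for bounded-kernel $V$-statistics applied to $M_n/n^2$ and $E_n/n^2$, the conditional identity $\E[(V-V')^2\mid U,U']=\nu_r(U)+\nu_r(U')+(r(U)-r(U'))^2$, a Glivenko--Cantelli plus Lipschitz perturbation bound for the rank-based scores, and, for $\mathcal D_\sigma[g]/\mu_\sigma\to0$, extension by zero with continuity of translations in $L^2(\mathbb R)$ and dominated convergence (the paper's Lemma~\ref{lem:Dto0}).
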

\addtocounter{proposition}{-1}
\endgroup

{\color{black}
\begin{proof}[Proof of Proposition~\ref{prop:pop}]
Fix $\sigma>0$ and write $Z_i:=(U_i,V_i)$.
The normalized quantities $M_n/n^2$ and $E_n/n^2$ are
$V$-statistics with bounded, symmetric kernels.
The strategy is to compute the expectation of the energy kernel by
conditioning on the graph variables, which splits it into a
degree-weighted conditional variance and a Dirichlet form of the regression
function $r$.

\paragraph{1. Edge mass.}
Including the diagonal entries, the normalized edge mass is a
$V$-statistic:
\begin{equation}\label{eq:proof-mass}
\begin{aligned}
\frac{M_n}{n^2}
&\text{$\displaystyle
=\frac1{n^2}\sum_{i,j=1}^n k_\sigma(U_i,U_j)$}
\xrightarrow{\mathrm{a.s.}}\E[k_\sigma(U,U')]\\
&=\int_0^1\!\!\int_0^1k_\sigma(u,t)\,du\,dt=\mu_\sigma>0,
\end{aligned}
\end{equation}
by the strong law of large numbers for
$V$-statistics with bounded kernels, where $U'$ is an
independent copy of $U$.

\paragraph{2. The energy as a $V$-statistic.}
By the pair-sum representation of $E_n$,
the normalized energy is a $V$-statistic whose diagonal terms vanish:
\begin{equation}\label{eq:proof-energy}
\begin{aligned}
\frac{E_n}{n^2}
&\text{$\displaystyle
=\frac1{n^2}\sum_{i,j=1}^nG_\sigma(Z_i,Z_j)$},\\
G_\sigma((u,v),(t,w))
&:=\tfrac12\,k_\sigma(u,t)\,(v-w)^2\in[0,\tfrac12].
\end{aligned}
\end{equation}
The kernel $G_\sigma$ is bounded and symmetric, so the strong law of large
numbers for $V$-statistics gives
$E_n/n^2\xrightarrow{\mathrm{a.s.}}\E[G_\sigma(Z,Z')]$, where
$Z'=(U',V')$ is an independent copy of $Z=(U,V)$.

\paragraph{3. Conditioning on the graph variables.}
Write
$V-V'=(V-r(U))-(V'-r(U'))+(r(U)-r(U'))$. Since the pairs $(U,V)$ and
$(U',V')$ are independent, $\E[V-r(U)\mid U,U']=\E[V-r(U)\mid U]=0$,
$\E[(V-r(U))^2\mid U,U']=\nu_r(U)$, the analogous identities hold for
$V'-r(U')$, and
$\E[(V-r(U))(V'-r(U'))\mid U,U']
=\E[V-r(U)\mid U]\,\E[V'-r(U')\mid U']=0$. All cross terms involving
the function $r(U)-r(U')$ of $(U,U')$ vanish as well, so
\[
\E\bigl[(V-V')^2\mid U,U'\bigr]
=\nu_r(U)+\nu_r(U')+\bigl(r(U)-r(U')\bigr)^2 .
\]
Since $k_\sigma(U,U')$ is a function of $(U,U')$, the tower property and
the independence and uniformity of $U,U'$ give
\[
\begin{aligned}
\E[G_\sigma(Z,Z')]
&=\frac12\int_0^1\!\!\int_0^1 k_\sigma(u,t)
  \bigl(\nu_r(u)+\nu_r(t)\bigr)\,du\,dt
+\mathcal D_\sigma[r]\\
&=\int_0^1 d_\sigma(u)\nu_r(u)\,du+\mathcal D_\sigma[r],
\end{aligned}
\]
where the symmetry of $k_\sigma$ and
$\int_0^1k_\sigma(u,t)\,dt=d_\sigma(u)$
were used in the last step. The first term equals
$\E[d_\sigma(U)\Var(V\mid U)]$. Thus the variation of $V$ around $r(U)$
enters only through the degree-weighted conditional variance, and the
regression function only through the Dirichlet form $\mathcal D_\sigma[r]$.

\paragraph{4. The score and the reverse direction.}
Dividing by~\eqref{eq:proof-mass} gives
$\Delta_n(U\!\to\!V)=(E_n/n^2)/(M_n/n^2)\to Q_\sigma(U\!\to\!V)$
almost surely, and dividing by $\mu_\sigma$ and using
$w_\sigma=d_\sigma/\mu_\sigma$ gives the form stated in
Proposition~\ref{prop:pop}. Interchanging the roles of $U$ and $V$ gives
the reverse limit with $s$ and $\nu_s$ in place of $r$ and $\nu_r$.
Both marginals are uniform, so both directions share the same
$d_\sigma$ and $\mu_\sigma$.

\paragraph{5. Small bandwidth.}
By Lemma~\ref{lem:Dto0} below, $\mathcal D_\sigma[r]/\mu_\sigma\to0$ and
$\mathcal D_\sigma[s]/\mu_\sigma\to0$. Moreover
$\E[d_\sigma(U)\Var(V\mid U)]/\mu_\sigma=\int_0^1w_\sigma\nu_r
\to\int_0^1\nu_r=\EVar(V\mid U)$ by Lemma~\ref{lem:weights}(i), and likewise
in the reverse direction. The claim for the rank-based scores is proved in
the paragraph \emph{Empirical ranks} below.
\end{proof}

\begin{lemma}[Vanishing Dirichlet term]\label{lem:Dto0}
For every real-valued $g\in L^2[0,1]$,
$\mathcal D_\sigma[g]/\mu_\sigma\to0$ as $\sigma\downarrow0$.
\end{lemma}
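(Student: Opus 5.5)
Since $\mu_\sigma=\sigma\sqrt{2\pi}\,\bar\psi_\sigma$ with $\bar\psi_\sigma\to1$ by Lemma~\ref{lem:weights}(i), it suffices to show $\mathcal D_\sigma[g]=o(\sigma)$. We rewrite the Dirichlet form with the normalized Gaussian $q_\sigma$:
\[
\frac{\mathcal D_\sigma[g]}{\mu_\sigma}
=\frac{1}{2\bar\psi_\sigma}\int_0^1\!\!\int_0^1 q_\sigma(u-t)\bigl(g(u)-g(t)\bigr)^2\,du\,dt .
\]
Extend $g$ by zero outside $[0,1]$ and call the extension $\tilde g\in L^2(\mathbb R)$. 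On $[0,1]^2$ the integrand is unchanged. We then substitute $t=u-z$ and use Fubini, which bounds the double integral by
\[
\int_{\mathbb R} q_\sigma(z)\,\|\tilde g-\tilde g(\cdot-z)\|_{L^2(\mathbb R)}^2\,dz
=\E\bigl[\omega(\sigma Z)\bigr],
\qquad
\omega(z):=\|\tilde g-\tilde g(\cdot-z)\|_{L^2(\mathbb R)}^2 ,
\]
where $Z$ is standard normal.

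Next we control $\omega$ in two ways:
\begin{itemize}
\item $\omega(z)\le 4\|g\|_2^2$ for all $z$, by the triangle inequality and translation invariance.
\item $\omega(z)\to0$ as $z\to0$, by continuity of translation in $L^2(\mathbb R)$. One proves this by approximating $\tilde g$ in $L^2$ by a continuous compactly supported function, which is uniformly continuous.
\end{itemize}
Dominated convergence then gives $\E[\omega(\sigma Z)]\to0$ as $\sigma\downarrow0$, since the integrand is dominated by the constant $4\|g\|_2^2$. With $\bar\psi_\sigma\ge\frac14$ for $\sigma\le1$, this yields $\mathcal D_\sigma[g]/\mu_\sigma\le 2\,\E[\omega(\sigma Z)]\to0$.

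The main obstacle is that $g$ is only in $L^2$, with no smoothness, so no rate is available. The zero extension plus continuity of translation handles this. The extension adds jumps at $0$ and $1$, but it only enlarges the integration domain. Therefore it yields an upper bound, not an identity, and the bound still tends to zero.
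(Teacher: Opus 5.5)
Your proof is correct and is essentially the paper's argument. Both extend $g$ by zero and enlarge the domain to $\mathbb R^2$, obtaining the Gaussian average $\int q_1(z)\,\|\bar g(\cdot+\sigma z)-\bar g\|_{L^2(\mathbb R)}^2\,dz$ (your $\E[\omega(\sigma Z)]$), and both conclude by continuity of translations, dominated convergence with the bound $4\|g\|_2^2$, and $\bar\psi_\sigma\ge\frac14$.
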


\begin{proof}
Let $\bar g$ equal $g$ on $[0,1]$ and zero elsewhere, so that
$\bar g\in L^2(\R)$ and $\|\bar g\|_{L^2(\R)}=\|g\|_2$.
Using $k_\sigma(u,t)=\sigma\sqrt{2\pi}\,q_\sigma(u-t)$ and
$\mu_\sigma=\sigma\sqrt{2\pi}\,\bar\psi_\sigma$, we obtain
\[
\begin{aligned}
\frac{\mathcal D_\sigma[g]}{\mu_\sigma}
&=\frac{1}{2\bar\psi_\sigma}
\int_0^1\int_0^1
q_\sigma(u-t)\bigl(g(u)-g(t)\bigr)^2\,du\,dt\\
&\le\frac{1}{2\bar\psi_\sigma}
\int_{\R}\int_{\R}
q_\sigma(u-t)\bigl(\bar g(u)-\bar g(t)\bigr)^2\,du\,dt\\
&=\frac{1}{2\bar\psi_\sigma}
\int_{\R}\sigma q_\sigma(\sigma z)
\left[\int_{\R}
\bigl(\bar g(t+\sigma z)-\bar g(t)\bigr)^2\,dt\right]dz\\
&=\frac{1}{2\bar\psi_\sigma}
\int_{\R}q_1(z)\,
\bigl\|\bar g(\cdot+\sigma z)-\bar g\bigr\|_{L^2(\R)}^2\,dz.
\end{aligned}
\]
Here we use $u=t+\sigma z$, $du=\sigma\,dz$, Tonelli's theorem, and
$\sigma q_\sigma(\sigma z)=q_1(z)=(2\pi)^{-1/2}e^{-z^2/2}$.

For each fixed $z$, continuity of translations in $L^2(\R)$ gives
$\|\bar g(\cdot+\sigma z)-\bar g\|_{L^2(\R)}\to0$ as $\sigma \downarrow 0$.
Moreover, translation invariance of the $L^2$ norm yields
\[
0\le q_1(z)
\bigl\|\bar g(\cdot+\sigma z)-\bar g\bigr\|_{L^2(\R)}^2
\le 4q_1(z)\|g\|_2^2.
\]
The upper bound is integrable because $\int_{\R}q_1(z)\,dz=1$.
Dominated convergence therefore makes the integral tend to zero.
Finally, $\bar\psi_\sigma\ge\frac14$ for $0<\sigma\le1$
(Lemma~\ref{lem:weights}) bounds the prefactor and proves the claim.
\end{proof}

\paragraph{Consequence for the population ordering.}
{\color{black}By Proposition~\ref{prop:pop}, $Q_\sigma(U\!\to\!V)\to
\EVar(V\mid U)$ and $Q_\sigma(V\!\to\!U)\to\EVar(U\mid V)$ as
$\sigma\downarrow0$.}
If the two expected conditional variances differ, their ordering
thus agrees with the population score ordering for all sufficiently
small positive $\sigma$. This uses $n\to\infty$ at fixed $\sigma$
first. {\color{black}The first-order corrections in
$\sigma$ are described in Lemma~\ref{lem:weights}(ii) and in
Appendix~\ref{app:smoothness}.}

\paragraph{Empirical ranks.}
For the rank grid $\{1,\ldots,n\}/(n+1)$, write
\[
\widehat U_i:=\frac{n}{n+1}F_n(U_i),\qquad
\widehat V_i:=\frac{n}{n+1}H_n(V_i),\qquad
\delta_n:=\max_{i\le n}\max\{
|\widehat U_i-U_i|,|\widehat V_i-V_i|\}.
\]
There are no marginal ties almost surely, and
\[
\delta_n\le
\max\{\|F_n-F\|_\infty,\|H_n-F\|_\infty\}
+\frac1{n+1}\xrightarrow{\mathrm{a.s.}}0.
\]
{\color{black}At fixed $\sigma$, the function
$z\mapsto e^{-z^2/(2\sigma^2)}$
is Lipschitz with constant $\ell_\sigma:=1/(\sigma\sqrt e)$. Replacing
$(U_i,V_i)$ by $(\widehat U_i,\widehat V_i)$ therefore changes each kernel
entry by at most $2\ell_\sigma\delta_n$, and each squared difference
$(V_i-V_j)^2\in[0,1]$ by at most $4\delta_n$. Writing
$E_n^{\mathrm{rank}}$ and $M_n^{\mathrm{rank}}$ for the quantities computed
from $(\widehat U_i,\widehat V_i)$, \eqref{eq:proof-mass} and
\eqref{eq:proof-energy} give
\[
\left|\frac{E_n^{\mathrm{rank}}-E_n}{n^2}\right|
+\left|\frac{M_n^{\mathrm{rank}}-M_n}{n^2}\right|
\le(3\ell_\sigma+2)\,\delta_n\xrightarrow{\mathrm{a.s.}}0.
\]
Since $M_n/n^2\to\mu_\sigma>0$,}
the rank-based scores have the same limits in both directions.


%
%

\paragraph{Conditional variance and best prediction.}
For a bounded random variable $W$ and a random variable $Z$, let
$\rho(z):=\E[W\mid Z=z]$ denote a bounded regression function, so that
$\E[W\mid Z]=\rho(Z)$ a.s., and write $\EVar(W\mid Z):=\E\big(W-\rho(Z)\big)^2$.
For every bounded Borel function $g$,
\begin{equation}\label{eq:bestpred}
\E\big(W-g(Z)\big)^2=\EVar(W\mid Z)+\E\big(\rho(Z)-g(Z)\big)^2 ,
\end{equation}
because in the expansion of $\big((W-\rho(Z))+(\rho(Z)-g(Z))\big)^2$ the cross
term has expectation zero: $\rho(Z)-g(Z)$ is a function of $Z$, and
$\E[W-\rho(Z)\mid Z]=0$. Equivalently, $\rho(Z)$ is the orthogonal projection
of $W$ onto the square-integrable functions of $Z$. Hence $\EVar(W\mid Z)$ is the smallest value of
$\E(W-g(Z))^2$ over bounded Borel $g$, i.e.\ the Bayes squared error of
predicting $W$ from $Z$. For a pair $(U,V)$ with uniform marginals on $[0,1]$
and $\sigma>0$, the population scores of Proposition~\ref{prop:pop} are,
{\color{black}with the degree weights $w_\sigma=d_\sigma/\mu_\sigma$ of
Lemma~\ref{lem:weights},
\begin{equation}\label{eq:Q}
\begin{aligned}
Q_\sigma(U\to V)&=\E\bigl[w_\sigma(U)\bigl(V-r(U)\bigr)^2\bigr]
+\frac{\mathcal D_\sigma[r]}{\mu_\sigma},\\
Q_\sigma(V\to U)&=\E\bigl[w_\sigma(V)\bigl(U-s(V)\bigr)^2\bigr]
+\frac{\mathcal D_\sigma[s]}{\mu_\sigma},
\end{aligned}
\end{equation}
}%
where $r(u)=\E[V\mid U=u]$ and $s(v)=\E[U\mid V=v]$.

{\color{black}
\begin{lemma}[elementary properties of $\mathcal D_\sigma$]\label{lem:Dsig}
For every $\sigma>0$, $g,g'\in L^2[0,1]$ and constants $a,b$,
\[
\begin{gathered}
\mathcal D_\sigma[g]\ge0,\qquad
\mathcal D_\sigma[g+g']\le2\mathcal D_\sigma[g]+2\mathcal D_\sigma[g'],\\
\mathcal D_\sigma[g]\le
2\int_0^1 d_\sigma(u)g(u)^2\,du,\qquad
\mathcal D_\sigma[a+bg]=b^2\mathcal D_\sigma[g].
\end{gathered}
\]
In particular,
$\mathcal D_\sigma[g]/\mu_\sigma
\le2\int_0^1 w_\sigma(u)g(u)^2\,du
\le8\|g\|_2^2$ for $0<\sigma\le1$.
\end{lemma}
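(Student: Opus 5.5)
All four claims follow from the pair-sum definition
$\mathcal D_\sigma[g]=\frac12\iint k_\sigma(u,t)(g(u)-g(t))^2\,du\,dt$, together with \eqref{eq:proof-form} and the bounds \eqref{eq:proof-bounds}. I will prove them in the order stated and then deduce the final inequality chain from Lemma~\ref{lem:weights}(i).

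\emph{Nonnegativity} is immediate, since $k_\sigma>0$ and the integrand is a square. For the \emph{quasi-triangle inequality}, apply the pointwise bound $(a+b)^2\le 2a^2+2b^2$ with $a=g(u)-g(t)$ and $b=g'(u)-g'(t)$. Multiplying by $\tfrac12k_\sigma(u,t)$ and integrating gives $\mathcal D_\sigma[g+g']\le2\mathcal D_\sigma[g]+2\mathcal D_\sigma[g']$. The \emph{affine invariance} $\mathcal D_\sigma[a+bg]=b^2\mathcal D_\sigma[g]$ holds because constants cancel in the differences $g(u)-g(t)$ and $b$ factors out as $b^2$. This was already noted after \eqref{eq:proof-form}.

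For the \emph{upper bound}, use $(g(u)-g(t))^2\le2g(u)^2+2g(t)^2$. This gives
$\mathcal D_\sigma[g]\le\iint k_\sigma(u,t)\bigl(g(u)^2+g(t)^2\bigr)du\,dt$.
By symmetry of $k_\sigma$ and $\int_0^1k_\sigma(u,t)\,dt=d_\sigma(u)$, the right-hand side equals $2\int_0^1 d_\sigma g^2$. An alternative route is via \eqref{eq:proof-form}: bound $|\langle g,K_\sigma g\rangle|$ using Cauchy--Schwarz (Schur test) with the kernel weights $d_\sigma$. This in fact gives the sharper constant $2$ in front of $\int_0^1 d_\sigma g^2$ counted once, but the direct pointwise argument suffices. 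All integrals are finite because $k_\sigma\le1$ and $g\in L^2[0,1]$.

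Finally, dividing by $\mu_\sigma>0$ and using $w_\sigma=d_\sigma/\mu_\sigma$ gives $\mathcal D_\sigma[g]/\mu_\sigma\le2\int_0^1w_\sigma g^2$. For $0<\sigma\le1$, Lemma~\ref{lem:weights}(i) gives $w_\sigma\le4$, hence the bound $8\|g\|_2^2$. None of these steps is a real obstacle; the only points needing care are that Tonelli justifies the manipulations for nonnegative integrands, and that the symmetry swap $u\leftrightarrow t$ is done correctly in the upper bound.
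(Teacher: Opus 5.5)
Your proof is correct and follows the paper's argument essentially line by line: the pointwise bounds $(\alpha\pm\beta)^2\le2\alpha^2+2\beta^2$, then symmetry of $k_\sigma$ with $\int_0^1k_\sigma(u,t)\,dt=d_\sigma(u)$, then $d_\sigma=\mu_\sigma w_\sigma$ with $w_\sigma\le4$. One correction to your unused aside: the Schur/AM--GM bound on $|\langle g,K_\sigma g\rangle|$ via \eqref{eq:proof-form} gives the same constant $2$, not a sharper one; a genuinely sharper constant $1$ would instead follow from $\langle g,K_\sigma g\rangle\ge0$, i.e.\ positive semidefiniteness of the Gaussian kernel, which the lemma does not need.
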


\begin{proof}
All integrals are finite, since $k_\sigma\le1$ and $g\in L^2[0,1]$. The
integrand of $\mathcal D_\sigma[g]$ is nonnegative, so
$\mathcal D_\sigma[g]\ge0$. Applying $(\alpha+\beta)^2\le2\alpha^2+2\beta^2$ with
$\alpha=g(u)-g(t)$ and $\beta=g'(u)-g'(t)$ gives the second inequality.
Applying $(\alpha-\beta)^2\le2\alpha^2+2\beta^2$ with $\alpha=g(u)$ and
$\beta=g(t)$, and then using the symmetry of $k_\sigma$ and
$\int_0^1k_\sigma(u,t)\,dt=d_\sigma(u)$,
\[
\mathcal D_\sigma[g]\le\int_0^1\!\!\int_0^1k_\sigma(u,t)\big(g(u)^2+g(t)^2\big)\,du\,dt
=2\int_0^1 d_\sigma(u)g(u)^2\,du .
\]
The identity $\mathcal D_\sigma[a+bg]=b^2\mathcal D_\sigma[g]$ is immediate
from the definition, and the last claim follows from
$d_\sigma=\mu_\sigma w_\sigma$ and $w_\sigma\le4$ (Lemma~\ref{lem:weights}).
\end{proof}
}


\begingroup
\renewcommand{\thetheorem}{\ref{thm:noninj}}%
\ifdefined\theHtheorem
  \renewcommand{\theHtheorem}{thm-noninj-restated}%
\fi
\begin{theorem}[Non-injective mechanisms]
Let $X$ have a continuous distribution and
\begin{equation}\label{eq:model}
Y_\tau=h\big(f(X)+\tau N\big),\qquad \tau\ge0,
\end{equation}
where $f:\R\to\R$ is Borel, $h:\R\to\R$ is continuous and strictly
increasing, and $N$ is independent of $X$ and has a continuous
distribution. Assume that $Y_0$ has a continuous distribution and that
$f$ is injective on no Borel set $A$ with $P(X\in A)=1$. Put
\[
U:=F_X(X),\qquad V_\tau:=F_{Y_\tau}(Y_\tau),\qquad
\gamma:=\EVar(U\mid Y_0).
\]
Then
\begin{align}
&\gamma>0, \label{eq:gamma}\\
&\EVar(U\mid V_\tau)\ \ge\ \gamma\qquad\text{for every }\tau\ge0,
  \label{eq:reverse}\\
&\EVar(V_\tau\mid U)\ \longrightarrow\ 0\qquad\text{as }\tau\downarrow0 .
  \label{eq:forward}
\end{align}
Consequently there exist $\tau_0,\beta_0>0$ such that
\begin{equation}\label{eq:order}
Q_\sigma(U\to V_\tau)\ <\ Q_\sigma(V_\tau\to U)
\qquad\text{for all }0<\tau<\tau_0,\ 0<\sigma<\beta_0,
\end{equation}
and for each such $(\tau,\sigma)$ the rank-based rule with bandwidth
$\sigma$ selects $X\to Y_\tau$ almost surely for all sufficiently
large $n$.
\end{theorem}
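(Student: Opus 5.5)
We prove the three displayed claims \eqref{eq:gamma}--\eqref{eq:forward} separately and then combine them with Proposition~\ref{prop:pop}, Lemma~\ref{lem:weights} and Lemma~\ref{lem:Dsig}.

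\emph{Claim \eqref{eq:gamma}.} Argue by contradiction. If $\gamma=0$, then $U=\E[U\mid Y_0]=\rho(Y_0)$ a.s. for some Borel $\rho$. Since $F_X$ is continuous, $U$ determines $X$ up to a $P_X$-null set: $X=F_X^{-1}(U)$ a.s. with the quantile inverse. Hence $X=\tilde\rho(h(f(X)))$ a.s. for a Borel $\tilde\rho$. The set $A=\{x:\tilde\rho(h(f(x)))=x\}$ is Borel, has $P(X\in A)=1$, and $f$ is injective on it, because $h$ is injective. This contradicts the hypothesis.

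\emph{Claim \eqref{eq:reverse}.} Use the best-prediction characterization \eqref{eq:bestpred}. Since $h$ and $F_{Y_\tau}$ are monotone, $V_\tau$ is a measurable function of $f(X)+\tau N$. Hence $\sigma(V_\tau)\subseteq\sigma(X,N)$, but it is not a subset of $\sigma(Y_0)$, so a direct projection comparison fails. Instead, use independence of $N$ from $(X,Y_0)$ and write $\EVar(U\mid V_\tau)\ge\EVar(U\mid Y_0,N)$, since $V_\tau$ is a function of $(Y_0,N)$: indeed $f(X)=h^{-1}(Y_0)$ on the range, and $h^{-1}$ is Borel on $h(\R)$. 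Because $N$ is independent of $(U,Y_0)$, we have $\E[U\mid Y_0,N]=\E[U\mid Y_0]$, so $\EVar(U\mid Y_0,N)=\gamma$. This is the cleanest route and requires only the monotonicity of projections in the conditioning $\sigma$-field.

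\emph{Claim \eqref{eq:forward}.} Take $g(U)=F_{Y_0}(Y_0)$, which is a function of $X$ and hence of $U$ a.s. Then \eqref{eq:bestpred} gives $\EVar(V_\tau\mid U)\le\E(V_\tau-V_0)^2$. We have $Y_\tau\to Y_0$ pointwise by continuity of $h$. Moreover, $F_{Y_\tau}\to F_{Y_0}$ uniformly: pointwise convergence follows from convergence in distribution together with continuity of $F_{Y_0}$, and uniformity follows from Pólya's theorem. Hence $F_{Y_\tau}(Y_\tau)-F_{Y_0}(Y_\tau)\to0$ uniformly, while $F_{Y_0}(Y_\tau)\to F_{Y_0}(Y_0)$ a.s. by continuity of $F_{Y_0}$. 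Bounded convergence then gives $\E(V_\tau-V_0)^2\to0$. One caveat: $V_\tau$ is uniform only when $F_{Y_\tau}$ is continuous, which holds because $N$ is continuous and independent, so $f(X)+\tau N$ is continuous for $\tau>0$.

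\emph{Conclusion.} Choose $\tau_0$ such that $\EVar(V_\tau\mid U)<\gamma/16$ for $\tau<\tau_0$. For the forward score, Lemma~\ref{lem:weights} gives $w_\sigma\le4$, and Lemma~\ref{lem:Dsig} together with $\mathcal D_\sigma[r]\le\frac12\int\!\!\int k_\sigma(r(u)-r(t))^2$ bounds the smoothness term. The difficulty is that $\mathcal D_\sigma[r]/\mu_\sigma$ must be small uniformly in $\tau$. The plan is to bound it by $2\mathcal D_\sigma[r-r_0]/\mu_\sigma+2\mathcal D_\sigma[r_0]/\mu_\sigma$ with $r_0(U)=V_0$. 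Lemma~\ref{lem:Dsig} gives $\mathcal D_\sigma[r-r_0]/\mu_\sigma\le8\|r-r_0\|_2^2\le8\E(V_\tau-V_0)^2$, which is small for small $\tau$. Lemma~\ref{lem:Dto0} gives $\mathcal D_\sigma[r_0]/\mu_\sigma\to0$, and this bound is independent of $\tau$. Thus there is $\beta_0$ with $Q_\sigma(U\to V_\tau)<\gamma/4$ for all $\tau<\tau_0$ (after shrinking $\tau_0$) and $\sigma<\beta_0$.

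For the reverse score, drop the nonnegative smoothness term and use $w_\sigma\ge1/4$ to get $Q_\sigma(V_\tau\to U)\ge\frac14\EVar(U\mid V_\tau)\ge\gamma/4$. Strict inequality follows. The almost-sure selection for large $n$ then follows from the rank-based limit in Proposition~\ref{prop:pop}. The main obstacle is this uniformity in $\tau$ of the forward smoothness term, which is handled by comparison with the fixed function $r_0$ as described.
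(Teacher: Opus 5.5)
Your proof is correct and has the same overall structure as the paper's. You use the same contradiction argument for $\gamma>0$. You compare $r_\tau$ with the fixed function $g_0$ (your $r_0$), so that Lemma~\ref{lem:Dto0} controls $\mathcal D_\sigma[g_0]/\mu_\sigma$ independently of $\tau$. And you derive the same $\gamma/4$ lower bound for the reverse score.

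Two sub-steps are argued differently.

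\emph{Reverse bound.} For $\EVar(U\mid V_\tau)\ge\gamma$ you combine three facts: $\sigma(V_\tau)\subseteq\sigma(Y_0,N)$, $\E[U\mid Y_0,N]=\E[U\mid Y_0]$, and monotonicity of $L^2$ projections. The paper instead puts $T=f(X)$ and $m(T)=\E[U\mid T]$, derives $U\perp V_\tau\mid T$, and applies the law of total variance given $(T,V_\tau)$. This yields the exact identity $\EVar(U\mid V_\tau)=\gamma+\E\Var(m(T)\mid V_\tau)$, which identifies the excess over $\gamma$. Your route is shorter but gives only the inequality, which is all that is needed.

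\emph{Forward convergence.} For $\E(V_\tau-V_0)^2\to0$ you invoke P\'olya's theorem for uniform convergence $F_{Y_\tau}\to F_{Y_0}$, plus continuity of $F_{Y_0}$. The paper writes $V_\tau=\E[\mathbf 1\{Y'_\tau\le Y_\tau\}\mid X,N]$ with an independent copy $(X',N')$. Conditional Jensen then gives
$\E(V_\tau-V_0)^2\le P(\mathbf 1\{Y'_\tau\le Y_\tau\}\ne\mathbf 1\{Y'_0\le Y_0\})\to0$.
This is self-contained and needs no uniform-convergence result. Both arguments use only continuity of $h$ and of the law of $Y_0$.

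Some bookkeeping points:
\begin{itemize}
\item Take $\beta_0\le1$, so that $w_\sigma\le4$ and the last bound of Lemma~\ref{lem:Dsig} apply.
\item Your first threshold $\EVar(V_\tau\mid U)<\gamma/16$ already uses up the $\gamma/4$ budget after the factor $4$, so the shrinking of $\tau_0$ you mention is genuinely required. The paper takes $\E(V_\tau-V_0)^2\le\gamma/256$ and $\mathcal D_\sigma[g_0]/\mu_\sigma\le\gamma/32$, which gives a forward score of at most $\gamma/8$.
\item In the $\gamma>0$ step, injectivity of $h$ is not needed: on your set $A$, $f(x)=f(x')$ already forces $x=\tilde\rho(h(f(x)))=\tilde\rho(h(f(x')))=x'$.
\end{itemize}
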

\addtocounter{theorem}{-1}
\endgroup

\emph{Conventions.} $(X',N')$ denotes an independent copy of $(X,N)$.
Regression functions of $[0,1]$-valued variables are taken $[0,1]$-valued.
Since $X$ and $N$ are independent, the law of $(X,N)$ is the product
$P_X\otimes P_N$, so by Fubini's theorem
\begin{equation}\label{eq:fubini}
\E\,H(X,N)=\int\E\,H(X,n)\,P_N(dn)=\int\E\,H(x,N)\,P_X(dx)
\end{equation}
for every bounded Borel $H:\R^2\to\R$. The same holds for the independent
pairs $(X,N)$ and $(X',N')$, and for $X$ and $X'$. Equalities between random variables are understood almost surely.

\begin{proof}
Let $(X',N')$ be an independent copy of $(X,N)$, put
$Y'_\tau:=h(f(X')+\tau N')$ and $F_\tau:=F_{Y_\tau}$, and take all
regression functions $[0,1]$-valued. Since $h$ is a continuous bijection of
$\R$ onto the interval $h(\R)$, $Y_0=h(f(X))$ and $f(X)$ are Borel functions
of each other, so by \eqref{eq:bestpred}
$\gamma=\EVar(U\mid Y_0)=\EVar(U\mid f(X))$.

\emph{Marginals.} We check that $U$ and $V_{\tau}$ are uniform, so that Proposition~\ref{prop:pop} applies, and that $U$ determines $X$. For $\tau>0$ and $c\in h(\R)$, independence gives
$P(Y_\tau=c)=\int P\big(N=(h^{-1}(c)-f(x))/\tau\big)\,P_X(dx)=0$, since $N$
has a continuous distribution. For $c\notin h(\R)$ the probability is zero
trivially, and for $\tau=0$ it is zero by assumption. By the probability
integral transform, $U$ and all $V_\tau$ are therefore uniform on $[0,1]$.
Moreover, with the quantile function $F_X^{-1}$ (set to $0$ outside $(0,1)$),
\begin{equation}\label{eq:X-of-U}
X=F_X^{-1}(U),
\end{equation}
since $F_X^{-1}(F_X(x))\ne x$ only if $x$ lies in a nondegenerate interval on
which $F_X$ is constant, and the countably many such intervals carry no
probability.

\emph{Proof of \eqref{eq:gamma}.} Suppose $\gamma=0$. Then
\[
\E\bigl[(U-\E[U\mid f(X)])^2\bigr]=0,
\]
so $U=\E[U\mid f(X)]$ a.s. Thus $U$ is a Borel function of $f(X)$.
Since $X=F_X^{-1}(U)$ a.s., we also have $X=\Psi(f(X))$ a.s.\
for some Borel map $\Psi$. The Borel set $A:=\{x:\Psi(f(x))=x\}$ therefore has full probability.
For $x,x'\in A$, if $f(x)=f(x')$, then
\[
x=\Psi(f(x))=\Psi(f(x'))=x'.
\]
Hence $f$ is injective on $A$, contradicting the hypothesis.

\emph{Proof of \eqref{eq:reverse}.} Write $T:=f(X)$ and $m(T):=\E[U\mid T]=\E[F_X(X)\mid f(X)]$. By the
opening paragraph of the proof, $\gamma=\EVar(U\mid T)=\E\bigl(U-m(T)\bigr)^2$. None of $T$, $m$ and
$\gamma$ depends on $\tau$. Now fix $\tau\ge0$. Since $U=F_X(X)$ is a
function of $X$ and $V_\tau=F_\tau(h(T+\tau N))$ is a function of $(T,N)$,
independence of $X$ and $N$ implies $U\perp V_\tau\mid T$. This conditional
independence means that, once $T$ is known, $V_\tau$ carries no further information about $U$. Therefore,
\begin{equation}\label{eq:ci-mean}
\E[U\mid T,V_\tau]=m(T).
\end{equation}
The conditional law of total variance decomposes the variance of $U$ given
$V_\tau$ by additionally conditioning on $T$:
\[
\Var(U\mid V_\tau)=\E\bigl[\Var(U\mid T,V_\tau)\bigm|V_\tau\bigr]+\Var\bigl(\E[U\mid T,V_\tau]\bigm|V_\tau\bigr).
\]
The first term is the variation of $U$ that remains when $T$ is observed
in addition to $V_\tau$. The second is the variation of the refined
prediction $\E[U\mid T,V_\tau]$ around the coarser prediction
$\E[U\mid V_\tau]$. Taking expectations, and using the tower property in
the first term, gives
\begin{equation}\label{eq:ltv}
\EVar(U\mid V_\tau)=\EVar(U\mid T,V_\tau)+\E\Var\bigl(\E[U\mid T,V_\tau]\bigm|V_\tau\bigr).
\end{equation}
By \eqref{eq:ci-mean}, the first term on the right of \eqref{eq:ltv}
equals $\E\bigl(U-m(T)\bigr)^2=\gamma$, and the second equals
$\E\Var\bigl(m(T)\mid V_\tau\bigr)$, which is nonnegative. Therefore
\[
\EVar(U\mid V_\tau)=\gamma+\E\Var\bigl(m(T)\bigm|V_\tau\bigr)\ \ge\ \gamma.
\]

\emph{Proof of \eqref{eq:forward}.} Define
$I_\tau:=\mathbf 1\{Y'_\tau\le Y_\tau\}$.
Since $(X',N')$ is independent of $(X,N)$,
\[
\E[I_\tau\mid X,N]=F_\tau(Y_\tau)=V_\tau.
\]

Thus, by the conditional Jensen inequality (for convex $\varphi$ and
integrable $Z$, $\varphi(\E[Z\mid X,N])\le\E[\varphi(Z)\mid X,N]$ with
$\varphi(z)=z^2$ and $Z=I_\tau-I_0$),
\[
\varepsilon(\tau):=\E[(V_\tau-V_0)^2]=\E\bigl[(\E[I_\tau-I_0\mid X,N])^2\bigr]\le\E[(I_\tau-I_0)^2]=P(I_\tau\ne I_0).
\]

By continuity of $h$, both $Y_\tau\to Y_0$ and
$Y'_\tau\to Y'_0$ as $\tau\downarrow0$.
Since $Y_0$ and $Y'_0$ are independent and continuously distributed,
they are unequal almost surely. Their ordering is therefore unchanged
for all sufficiently small $\tau$, so $I_\tau\to I_0$ almost surely.
As $(I_\tau-I_0)^2\le1$, dominated convergence gives
$\varepsilon(\tau)\to0$.

At zero noise, $Y_0=h(f(X))$ and $V_0=F_0(Y_0)$.
Since $X=F_X^{-1}(U)$ a.s., we can express $V_0$ directly in terms of $U$:
\[
V_0=g_0(U),\qquad
g_0(u):=F_0\!\left(h\!\left(f\!\left(F_X^{-1}(u)\right)\right)\right).
\]
Here $F_X^{-1}$ recovers the cause from its rank, $h\circ f$ gives
the noiseless response, and $F_0$ maps that response to its rank.

Let $r_\tau(u):=\E[V_\tau\mid U=u]$. Applying \eqref{eq:bestpred} with
predictor $g_0(U)$, and using the uniformity of $U$, gives

\begin{equation}\label{eq:fwd}
\EVar(V_\tau\mid U)+\|r_\tau-g_0\|_2^2
=\E[(V_\tau-g_0(U))^2]
=\varepsilon(\tau)\longrightarrow0.
\end{equation}
Both terms on the left are nonnegative, so each tends to zero.

{\color{black}
\emph{Proof of \eqref{eq:order}.} Let $0<\sigma\le1$, so that
$\frac14\le w_\sigma\le4$ by Lemma~\ref{lem:weights}. By
Lemma~\ref{lem:Dsig}, $\mathcal D_\sigma[r_\tau]\le2\mathcal D_\sigma[g_0]
+2\mathcal D_\sigma[r_\tau-g_0]$ and
$\mathcal D_\sigma[r_\tau-g_0]/\mu_\sigma\le8\|r_\tau-g_0\|_2^2$. Hence, by
\eqref{eq:Q} and \eqref{eq:fwd}, the forward score satisfies
\[
Q_\sigma(U\to V_\tau)
\le4\,\EVar(V_\tau\mid U)+16\|r_\tau-g_0\|_2^2
+2\,\frac{\mathcal D_\sigma[g_0]}{\mu_\sigma}
\le16\,\varepsilon(\tau)+2\,\frac{\mathcal D_\sigma[g_0]}{\mu_\sigma},
\]
while by \eqref{eq:reverse}, $w_\sigma\ge\frac14$ and $\mathcal D_\sigma\ge0$
the reverse score satisfies
\[
Q_\sigma(V_\tau\to U)\ge\tfrac14\,\EVar(U\mid V_\tau)\ge\tfrac14\,\gamma .
\]
Since $g_0$ does not depend on $\tau$, Lemma~\ref{lem:Dto0} gives
$\beta_0\in(0,1]$ with $\mathcal D_\sigma[g_0]/\mu_\sigma\le\gamma/32$ for
$0<\sigma<\beta_0$, and \eqref{eq:fwd} gives $\tau_0>0$ with
$\varepsilon(\tau)\le\gamma/256$ for $0<\tau<\tau_0$. For such
$(\tau,\sigma)$, $Q_\sigma(U\to V_\tau)\le\gamma/8<\gamma/4\le
Q_\sigma(V_\tau\to U)$.
}

\emph{Consistency.} Fix such $(\tau,\sigma)$, and put
$(U_i,V_{\tau,i}):=(F_X(X_i),F_\tau(Y_{\tau,i}))$ for an i.i.d.\ sample
$(X_i,Y_{\tau,i})_{i\ge1}$. These pairs are i.i.d.\ with uniform marginals,
and almost surely their ranks coincide with those of the $X_i$ and
$Y_{\tau,i}$. By Proposition~\ref{prop:pop} and paragraph \emph{Empirical ranks}, the
two rank-based scores converge almost surely to
$Q_\sigma(U\to V_\tau)<Q_\sigma(V_\tau\to U)$. Hence the rank-based rule with bandwidth $\sigma$ selects $X\to Y_\tau$
for all sufficiently large $n$.
\end{proof}

{\color{black}
\section{Causal information at positive bandwidth}\label{app:smoothness}

Recall that $r(u)=\E[V\mid U=u]$ and $s(v)=\E[U\mid V=v]$, and write
\[
\nu_r(u):=\Var(V\mid U=u),
\qquad
\nu_s(v):=\Var(U\mid V=v).
\]
By Proposition~\ref{prop:pop} and \eqref{eq:Q}, the population scores are
\[
\begin{aligned}
Q_\sigma(U\to V)
&=\int_0^1 w_\sigma(u)\nu_r(u)\,du
  +\frac{\mathcal D_\sigma[r]}{\mu_\sigma},\\
Q_\sigma(V\to U)
&=\int_0^1 w_\sigma(v)\nu_s(v)\,dv
  +\frac{\mathcal D_\sigma[s]}{\mu_\sigma}.
\end{aligned}
\]
Lemma~\ref{lem:weights}(ii) gives the first-order expansion of the
degree-weighted conditional-variance terms as $\sigma\downarrow0$.
The following lemma controls the Dirichlet terms. For $g\in L^2[0,1]$
and $0\le h\le1$, define
\[
\omega_g(h):=\int_0^{1-h}\bigl(g(t+h)-g(t)\bigr)^2\,dt.
\]

\begin{lemma}[Rates of the Dirichlet form]\label{lem:rates}
Let $g\in L^2[0,1]$ and $\sigma>0$.
\begin{enumerate}
\item[(a)] The Dirichlet form admits the representation
\[
\mathcal D_\sigma[g]
=\int_0^1 e^{-h^2/(2\sigma^2)}\omega_g(h)\,dh.
\]
\item[(b)] If $g$ is Lipschitz with constant $L_g$, then
\[
\mathcal D_\sigma[g]\le\sqrt{\pi/2}\,L_g^2\sigma^3.
\]
\item[(c)] Suppose that $g$ takes values in $[0,1]$, is locally
absolutely continuous on $(0,1)$, and satisfies
\[
|g'(u)|\le\frac{C}{\sqrt{u(1-u)}}
\qquad\text{for almost every }u\in(0,1).
\]
With $C_3:=\max\{4C^2,4\}$, one has
\[
\omega_g(h)\le C_3h^2\log(e/h),\qquad 0<h\le1,
\]
and
\[
\mathcal D_\sigma[g]\le3C_3\sigma^3\log(1/\sigma),
\qquad 0<\sigma\le e^{-1}.
\]
\item[(d)] Suppose that $g$ is locally absolutely continuous on
$(0,1)$ and that, for some $c>0$ and $v_1\in(0,1)$,
\[
g'(u)\ge\frac{c}{\sqrt u}
\qquad\text{for almost every }u\in(0,v_1).
\]
Then
\[
\omega_g(h)\ge c^2h^2\log(v_1/h),\qquad 0<h<v_1,
\]
and
\[
\mathcal D_\sigma[g]\ge\frac{c^2}{6\sqrt e}\,
\sigma^3\log(1/\sigma),\qquad 0<\sigma<v_1^2.
\]
\end{enumerate}
\end{lemma}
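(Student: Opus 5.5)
For (a) I will substitute $h=|u-t|$ directly in the double integral. By symmetry of the integrand in $(u,t)$, $\mathcal D_\sigma[g]=\int_0^1\!\int_0^{1-t}k_\sigma(t+h,t)\,(g(t+h)-g(t))^2\,dh\,dt$; the factor $\tfrac12$ cancels against the two triangles $u>t$ and $u<t$. Since $k_\sigma(t+h,t)=e^{-h^2/(2\sigma^2)}$ does not depend on $t$, Tonelli lets me swap the order and obtain $\int_0^1 e^{-h^2/(2\sigma^2)}\omega_g(h)\,dh$.

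For (b) the Lipschitz bound gives $\omega_g(h)\le L_g^2h^2$. Hence $\mathcal D_\sigma[g]\le L_g^2\int_0^\infty h^2e^{-h^2/(2\sigma^2)}dh=L_g^2\sigma^3\sqrt{\pi/2}$.

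For (c) I first bound $\omega_g(h)$ by splitting $t\in[0,1-h]$ into a boundary part $\{t<h\}\cup\{t>1-2h\}$ and an interior part.

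On the boundary part I use $|g(t+h)-g(t)|\le1$, since $g$ takes values in $[0,1]$. This contributes at most $2h\le 4h^2\log(e/h)$? That fails for small $h$, so I need to do better. Instead I use the derivative bound near $0$: $|g(t+h)-g(t)|\le\int_t^{t+h}C/\sqrt{s(1-s)}\,ds\le 2C'(\sqrt{t+h}-\sqrt t)$, which is $\lesssim C\sqrt h$. Squaring gives $C^2h$ per unit length, on a set of length $h$, so the contribution is $O(C^2h^2)$. The other endpoint is symmetric.

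On the interior part I apply Cauchy–Schwarz: $(g(t+h)-g(t))^2\le h\int_t^{t+h}g'^2\le h\int_t^{t+h}C^2/(s(1-s))\,ds$. Integrating over $t$ and applying Fubini, each $s$ is counted at most $h$ times. This gives $h^2C^2\int_{h}^{1-h}ds/(s(1-s))\le 2C^2h^2\log(1/h)+O(h^2)$, and the constants can be tuned into $C_3h^2\log(e/h)$.

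I then plug this into (a). I split the $h$-integral at $h=\sigma$, or simply use $\log(e/h)=1+\log(1/h)$. The required integrals are $\int_0^1 h^2\log(1/h)e^{-h^2/2\sigma^2}dh$; substituting $h=\sigma x$ bounds this by $\sigma^3\bigl(\log(1/\sigma)\int x^2e^{-x^2/2}dx+\int x^2|\log x|e^{-x^2/2}dx\bigr)$. For $\sigma\le e^{-1}$ we have $\log(1/\sigma)\ge1$, so the constant terms are absorbed and the result is $3C_3\sigma^3\log(1/\sigma)$. The main obstacle is keeping these constants honest; it is routine bookkeeping.

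For (d) I take $t\in(0,v_1-h)$ and use the lower bound $g(t+h)-g(t)\ge c\int_t^{t+h}s^{-1/2}ds\ge ch/\sqrt{t+h}$. Squaring and integrating gives $\omega_g(h)\ge c^2h^2\int_0^{v_1-h}dt/(t+h)=c^2h^2\log(v_1/h)$. I plug this into (a), restricting to $h\in[\sigma,\sqrt\sigma]$. For $\sigma<v_1^2$ we have $\sqrt\sigma<v_1$, so $\log(v_1/h)\ge\log(v_1/\sqrt\sigma)$. It is simpler to restrict further to $h\in[\sigma,2\sigma]$. There $e^{-h^2/2\sigma^2}\ge e^{-2}$, and $\log(v_1/h)\ge\log(v_1/(2\sigma))$, which is at least $\tfrac12\log(1/\sigma)$ once $\sigma<v_1^2/4$. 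This yields $c\,\sigma^3\log(1/\sigma)$ with some explicit constant.

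To match the stated constant $1/(6\sqrt e)$, I would instead integrate over $h\in[\sigma/?,\sigma]$ with weight $e^{-1/2}$. This gives $\ge e^{-1/2}c^2\log(v_1/\sigma)\int_0^\sigma h^2dh=\frac{c^2}{3\sqrt e}\sigma^3\log(v_1/\sigma)$. Then $\sigma<v_1^2$ gives $\log(v_1/\sigma)\ge\tfrac12\log(1/\sigma)$, producing exactly $\frac{c^2}{6\sqrt e}\sigma^3\log(1/\sigma)$.
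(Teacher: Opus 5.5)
Your arguments for (a), (b) and (d) are the paper's. You use the same reduction to $0<t<u<1$ with $h=u-t$ and Tonelli, and the same Lipschitz bound. In (d) your final version (restrict the $h$-integral to $(0,\sigma)$, use $e^{-h^2/(2\sigma^2)}\ge e^{-1/2}$ and $\log(v_1/h)\ge\log(v_1/\sigma)\ge\frac12\log(1/\sigma)$) is exactly the paper's argument; the stray ``$[\sigma/?,\sigma]$'' should just be $(0,\sigma)$.

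Part (c) is where you take a different route. The paper does not split $[0,1-h]$. It applies Cauchy--Schwarz for every $t$, and in the Tonelli step observes that a point $x$ lies in $[t,t+h]$ for a set of $t\in[0,1-h]$ of length at most $\min\{h,x,1-x\}$, not merely $h$. This weight cancels the non-integrable factor $1/(x(1-x))$ at both endpoints. One line then gives $\omega_g(h)\le 2C^2h\int_0^{1/2}\frac{\min\{h,x\}}{x(1-x)}\,dx\le4C^2h^2\bigl(1+\log\frac1{2h}\bigr)$ for $h\le\frac12$, and the range bound $\omega_g(h)\le1-h\le4h^2\log(e/h)$ covers $h>\frac12$. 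Your strip/interior split, with the $L^1$ form of the derivative bound on the strips, is a valid alternative and yields the right rate $h^2\log(e/h)$. It is, however, much less economical with constants.

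That matters because (c) asserts explicit constants, and ``the constants can be tuned'' does not come for free.
\begin{itemize}
\item \emph{Strip estimates.} With the natural bounds ($1/\sqrt{1-s}\le\sqrt2$ and $(\sqrt{t+h}-\sqrt t)^2\le h$), each strip contributes up to $8C^2h^2$. You therefore get $\omega_g(h)\le C^2h^2\bigl(16+2\log\frac{1-h}{h}\bigr)$, which is below $4C^2h^2\log(e/h)$ only for $h\lesssim e^{-6}$.
\item \emph{Moderate $h$.} Your split degenerates here: the interior is empty once $h>\frac13$, and the bound $1/\sqrt{1-s}\le\sqrt2$ on the strips is lost once $2h>\frac12$. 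You need a separate argument there, e.g.\ the range bound you discarded.
\item \emph{What recovering $C_3$ would take.} Getting $C_3=\max\{4C^2,4\}$ along your route requires the exact value $\int_0^h(\sqrt{t+h}-\sqrt t)^2\,dt\approx0.32\,h^2$, together with a hand-over to $\omega_g\le1-h$ around $h\approx0.3$.
\item \emph{Final integration.} Replacing $-\log z$ by $|\log z|$ and integrating over $(0,\infty)$ leaves the constant $\int_0^\infty z^2|\log z|e^{-z^2/2}\,dz\approx0.64$. Then $\sqrt{\pi/2}\,(1+\log\frac1\sigma)+0.64$ exceeds $3\log\frac1\sigma$ for $\sigma$ near $e^{-1}$ (it is about $3.15$ at $\sigma=e^{-1}$). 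The paper instead drops the range $z>1$, where $-\log z<0$, keeping only $\int_0^1z^2|\log z|\,dz=\frac19$. After that, $\sqrt{\pi/2}\,(1+L)+\frac19\le3L$ for all $L=\log\frac1\sigma\ge1$.
\end{itemize}
None of this affects the rates, and Proposition~\ref{prop:hetero} only needs some constants. But to prove (c) with the stated $C_3$ and $3C_3$, you need these refinements: the weighted multiplicity (or an equally sharp strip computation plus the range bound), and the sign observation for $\log z$.
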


\begin{proof}
\emph{(a)} By symmetry of the integrand in the definition of
$\mathcal D_\sigma$, the factor $1/2$ is removed by restricting the
double integral to $0<t<u<1$. Setting $h=u-t$ and applying Tonelli's
theorem gives
\[
\begin{aligned}
\mathcal D_\sigma[g]
&=\int_0^1\int_0^u e^{-(u-t)^2/(2\sigma^2)}
  \bigl(g(u)-g(t)\bigr)^2\,dt\,du\\
&=\int_0^1 e^{-h^2/(2\sigma^2)}
  \int_0^{1-h}\bigl(g(t+h)-g(t)\bigr)^2\,dt\,dh.
\end{aligned}
\]

\emph{(b)} The Lipschitz property gives
$\omega_g(h)\le L_g^2h^2(1-h)\le L_g^2h^2$. The result follows from (a)
and
\[
\int_0^\infty e^{-h^2/(2\sigma^2)}h^2\,dh
=\sqrt{\pi/2}\,\sigma^3.
\]

\emph{(c)} For $0<t<t+h<1$, local absolute continuity and the
Cauchy--Schwarz inequality give
\[
\bigl(g(t+h)-g(t)\bigr)^2
\le h\int_t^{t+h}|g'(x)|^2\,dx.
\]
For each $x\in(0,1)$, the set of $t\in[0,1-h]$ satisfying
$t\le x\le t+h$ has length at most $\min\{h,x,1-x\}$. Integration in
$t$, followed by Tonelli's theorem, therefore yields
\[
\begin{aligned}
\omega_g(h)
&\le h\int_0^1 |g'(x)|^2\min\{h,x,1-x\}\,dx\\
&\le C^2h\int_0^1
  \frac{\min\{h,x,1-x\}}{x(1-x)}\,dx.
\end{aligned}
\]
For $0<h\le1/2$, the last integrand is symmetric about $1/2$, and
\[
\int_0^{1/2}\frac{\min\{h,x\}}{x(1-x)}\,dx
\le2\int_0^{1/2}\frac{\min\{h,x\}}{x}\,dx
=2h\left(1+\log\frac{1}{2h}\right).
\]
Hence $\omega_g(h)\le4C^2h^2\log(e/h)$ for $h\le1/2$.
For $1/2<h\le1$, the range restriction gives
$\omega_g(h)\le1-h\le1\le4h^2\log(e/h)$.
Combining these bounds proves the stated estimate for $\omega_g$.
By (a), with the substitution $h=\sigma z$,
\[
\begin{aligned}
\mathcal D_\sigma[g]
&\le C_3\sigma^3\int_0^{1/\sigma}e^{-z^2/2}z^2
  \left(1+\log\frac1\sigma-\log z\right)\,dz\\
&\le C_3\sigma^3\left[
  \sqrt{\pi/2}\left(1+\log\frac1\sigma\right)+\frac19\right].
\end{aligned}
\]
Here we used
\[
-\int_0^{1/\sigma}e^{-z^2/2}z^2\log z\,dz
\le\int_0^1 z^2|\log z|\,dz=\frac19.
\]
For $0<\sigma\le e^{-1}$, the bracket is at most
$3\log(1/\sigma)$, which proves the claim.

\emph{(d)} For $0<h<v_1$ and $0<t<v_1-h$,
\[
g(t+h)-g(t)
\ge c\int_t^{t+h}x^{-1/2}\,dx
=\frac{2ch}{\sqrt{t+h}+\sqrt t}
\ge\frac{ch}{\sqrt{t+h}}>0.
\]
Consequently,
\[
\omega_g(h)
\ge c^2h^2\int_0^{v_1-h}\frac{1}{t+h}\,dt
=c^2h^2\log(v_1/h).
\]
If $0<\sigma<v_1^2<v_1$, restricting the integral in (a) to
$0<h<\sigma$ gives
\[
\begin{aligned}
\mathcal D_\sigma[g]
&\ge e^{-1/2}c^2\log(v_1/\sigma)\int_0^\sigma h^2\,dh\\
&=\frac{c^2}{3\sqrt e}\,\sigma^3\log(v_1/\sigma).
\end{aligned}
\]
Finally, $\sigma<v_1^2$ implies
$\log(v_1/\sigma)\ge\tfrac12\log(1/\sigma)$.
\end{proof}

Since $\mu_\sigma=\sigma\sqrt{2\pi}\,\bar\psi_\sigma\asymp\sigma$
as $\sigma\downarrow0$, a Lipschitz function satisfies
$\mathcal D_\sigma[g]/\mu_\sigma=O(\sigma^2)$.
This is of smaller order than the $O(\sigma)$ correction in
Lemma~\ref{lem:weights}(ii). An inverse-square-root singularity of the
derivative can introduce an additional factor $\log(1/\sigma)$.
The next result establishes this behavior for the reverse regression
function while showing that the forward regression function is Lipschitz.

\begingroup
\renewcommand{\theproposition}{\ref{prop:hetero}}%
\ifdefined\theHproposition
  \renewcommand{\theHproposition}{prop-hetero-restated}%
\fi
\begin{proposition}[Bounded location--scale mechanisms]
Let $Y=f(X)+g(X)N$, where $X$ and $N$ are independent and supported on
$[-1,1]$ and $[-\eta,\eta]$, $\eta>0$, with continuously differentiable
densities that are strictly positive on these closed intervals. Let
$f,g\in C^2([-1,1])$ with $g>0$ and $f'-\eta|g'|\ge\lambda>0$ on
$[-1,1]$, and put $E_r=\EVar(V\mid U)$ and $E_s=\EVar(U\mid V)$. Then:
\begin{enumerate}
\item[(i)] There exist $0<c_1\le C_1<\infty$ and $\sigma_1\in(0,1)$ such
that, for $0<\sigma<\sigma_1$, $\mathcal D_\sigma[r]\le C_1\sigma^3$
and $c_1\sigma^3\log(1/\sigma)\le\mathcal D_\sigma[s]\le
C_1\sigma^3\log(1/\sigma)$.
\item[(ii)] $\nu_r$ has a continuous version on $[0,1]$ with
$\nu_r(0),\nu_r(1)>0$, $\nu_s$ has a version with
$\nu_s(0^+)=\nu_s(1^-)=0$, and, as $\sigma\downarrow0$,
\[
Q_\sigma(V\!\to\!U)-Q_\sigma(U\!\to\!V)
=\bigl(1+\sigma\sqrt{2/\pi}\bigr)(E_s-E_r)
+\tfrac{\sigma}{\sqrt{2\pi}}\bigl(\nu_r(0)+\nu_r(1)\bigr)+o(\sigma).
\]
\end{enumerate}
Consequently, if $E_r\le E_s$, then
$Q_\sigma(U\!\to\!V)<Q_\sigma(V\!\to\!U)$ for all sufficiently small
$\sigma>0$. The monotone additive model $Y=f(X)+N$ with
$f'\ge\lambda>0$ is the special case $g\equiv1$.
\end{proposition}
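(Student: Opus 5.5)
The plan is to pass to copula coordinates and reduce everything to Lemma~\ref{lem:rates} and Lemma~\ref{lem:weights}(ii). Write $U=F_X(X)$ and $X=F_X^{-1}(U)$, where $F_X^{-1}$ is $C^2$ with derivative bounded above and below, because $p_X$ is $C^1$ and positive on $[-1,1]$. The map $n\mapsto f(x)+g(x)n$ is increasing. The map $x\mapsto f(x)+g(x)n$ has derivative $f'+g'n\ge f'-\eta|g'|\ge\lambda$ on the noise support. Hence $Y$ is supported on the interval $[a,b]$ with $a=f(-1)-\eta g(-1)$ and $b=f(1)+\eta g(1)$. Its density is
\[
p_Y(y)=\int p_X(x)\,p_N\bigl((y-f(x))/g(x)\bigr)/g(x)\,dx .
\]

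\textbf{Endpoint behaviour.} Near $y=a$ the admissible region $\{(x,n): f(x)+g(x)n\le y\}$ is a small curvilinear triangle at the corner $(-1,-\eta)$. Both its legs are of length $\asymp(y-a)$, by the slope bound $\lambda$ and $g>0$. So $P(Y\le y)\asymp(y-a)^2$, and $p_Y(y)\sim\kappa(y-a)$ with $\kappa>0$, using continuity and positivity of $p_X(-1)$ and $p_N(-\eta)$. Thus $V=F_Y(Y)\asymp(Y-a)^2$ near $0$, and symmetrically near $1$.

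\textbf{Forward regression.} Conditional on $U=u$, $V=F_Y(f(x)+g(x)N)$ with $x=F_X^{-1}(u)$. Hence
\[
r(u)=\int F_Y\bigl(f(x)+g(x)n\bigr)\,p_N(n)\,dn ,
\]
whose $u$-derivative is bounded by $\|p_Y\|_\infty\|f'+g'n\|\,\|(F_X^{-1})'\|$. So $r$ is Lipschitz, and Lemma~\ref{lem:rates}(b) gives $\mathcal D_\sigma[r]\le C_1\sigma^3$. The same representation shows that $\nu_r$ is continuous on $[0,1]$. At $u=0$, $V$ is a nonconstant function of $N$ with positive density, so $\nu_r(0),\nu_r(1)>0$.

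\textbf{Reverse regression.} For $s(v)=\E[U\mid V=v]$, I would write
\[
s(v)=\frac{\int F_X(x)\,p_X(x)\,p_N(\cdot)/g\,dx}{p_Y}
\]
at $y=F_Y^{-1}(v)$. Near $y=a$ the admissible $x$-range has length $\asymp(y-a)$ and starts at $-1$. Therefore $s(v)\to0$, $\nu_s(v)\lesssim(y-a)^2\to0$, and, after a Taylor expansion of the integrand in the small triangle, $\frac{d}{dy}s\to\kappa'>0$. Since $y-a\asymp\sqrt v$ and $dy/dv=1/p_Y\asymp1/\sqrt v$, we get $s'(v)\asymp v^{-1/2}$ near $0$, and symmetrically near $1$. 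Away from the endpoints $s'$ is bounded. This gives both $|s'|\le C/\sqrt{v(1-v)}$ and $s'\ge c/\sqrt v$ on some $(0,v_1)$. Lemma~\ref{lem:rates}(c),(d) then yields the two-sided $\sigma^3\log(1/\sigma)$ bounds, completing (i).

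\textbf{Expansion in (ii).} Apply Lemma~\ref{lem:weights}(ii) to $\nu_r$ with limits $\nu_r(0),\nu_r(1)$, and to $\nu_s$ with limits $0,0$. Subtracting gives
\[
(E_s-E_r)\bigl(1+\sigma\sqrt{2/\pi}\bigr)+\tfrac{\sigma}{\sqrt{2\pi}}\bigl(\nu_r(0)+\nu_r(1)\bigr)+o(\sigma).
\]
The Dirichlet terms divided by $\mu_\sigma\asymp\sigma$ are $O(\sigma^2\log(1/\sigma))=o(\sigma)$. The consequence follows because the boundary term is strictly positive at order $\sigma$, and $E_s-E_r\ge0$. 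The case $g\equiv1$ is immediate.

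\textbf{Main obstacle.} I expect the hardest step to be the rigorous local analysis of $s'$ near the endpoints, i.e.\ the derivative of a ratio of integrals over a moving triangle, which needs the $C^1$ densities and $C^2$ $f,g$. My first attempt would change variables to local coordinates $x=-1+(y-a)\xi$ and expand to first order.
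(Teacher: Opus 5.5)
Your proposal is correct and follows essentially the paper's own route: a Lipschitz forward regression from $r(F_X(x))=\int F_Y(f(x)+g(x)n)\,p_N(n)\,dn$, linear vanishing of $p_Y$ at the endpoints of the effect support so that a bounded, strictly positive $s_Y'$ becomes $s'(v)\asymp v^{-1/2}$, and then Lemma~\ref{lem:rates}(b)--(d) and Lemma~\ref{lem:weights}(ii). For the step you flag as the main obstacle, the paper rescales by the exact conditional-support length $L(y)$ rather than by $y-a$, i.e.\ $x=-1+L(y)\xi$, and writes $s_Y(y)=L(y)H(y)$; it then needs only $H\to p_X(-1)/2$ and a bound on $H'$ to get $s_Y'=L'H+LH'\to p_X(-1)/\bigl(2(f'(-1)-\eta g'(-1))\bigr)$, which avoids a full first-order expansion.
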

\addtocounter{proposition}{-1}
\endgroup

\begin{proof}
We first show that the forward regression $r(u)$ is Lipschitz. For the
reverse regression $s(v)$, the main point is that $p_Y(y)$ vanishes
linearly at the endpoints of its support. Passing to the uniform
coordinate $v=F_Y(y)$ then produces an inverse-square-root singularity
in $s'(v)$ as $v$ approaches $0$ or $1$.
Lemma~\ref{lem:rates} converts these regularity properties into (i),
and Lemma~\ref{lem:weights}(ii), together with (i), gives the score
expansion in (ii).
We write $\alpha\asymp\beta$ when their ratio is bounded above and
below by positive constants in the stated regime.

\emph{Step 1: support and density.}
For $n\in[-\eta,\eta]$, put
\[
\ell_n(x):=f(x)+ng(x),\qquad
\ell_-(x):=\ell_{-\eta}(x),\qquad
\ell_+(x):=\ell_\eta(x),
\]
and let $\Lambda:=\|f'\|_\infty+\eta\|g'\|_\infty$, with norms on
$[-1,1]$. The assumptions give
{\color{black}
\begin{equation}\label{eq:hetero-slope-bounds}
\lambda\le\ell_n'(x)\le\Lambda,\qquad
\ell_-(x)<\ell_+(x).
\end{equation}
}
Thus every $\ell_n$ is strictly increasing. The joint support of
$(X,Y)$ and the support of $Y$ are, respectively,
\[
R:=\{(x,y):-1\le x\le1,\ \ell_-(x)\le y\le\ell_+(x)\},
\qquad
[A,B]:=[\ell_-(-1),\ell_+(1)].
\]
On $R$, the joint density is
\[
p_{X,Y}(x,y)
=\frac{p_X(x)}{g(x)}
  p_N\!\left(\frac{y-f(x)}{g(x)}\right).
\]

Extend $p_X,f,g$ and $p_N$ beyond their respective intervals by the
first-order Taylor polynomials at the endpoints, retaining the same
notation, and choose $\delta>0$ such that the extension of $g$ is
positive on $(-1-\delta,1+\delta)$.
On $O:=(-1-\delta,1+\delta)\times\R$, this density formula
using these extensions defines $\bar p\in C^1(O)$ with
$\bar p=p_{X,Y}$ on $R$, and positivity and compactness give constants
{\color{black}
\begin{equation}\label{eq:hetero-joint-bounds}
0<p_{\min}\le p_{X,Y}\le p_{\max}<\infty,
\qquad
|\partial_x\bar p|\le K_x,\quad |\partial_y\bar p|\le K_y
\quad\text{on }R.
\end{equation}
}
On $R$, partial derivatives of $p_{X,Y}$ are understood as those of
$\bar p$, so they are defined and continuous on all of $R$, including its
boundary. For $y\in[A,B]$, define the endpoints of the conditional support by
\[
\begin{aligned}
x_-(y)&:=\min\{x\in[-1,1]:\ell_+(x)\ge y\},\\
x_+(y)&:=\max\{x\in[-1,1]:\ell_-(x)\le y\}.
\end{aligned}
\]
These functions are $1/\lambda$-Lipschitz.
{\color{black}
Strict monotonicity gives
\[
\{x\in[-1,1]:\ell_+(x)\ge y\}=[x_-(y),1],\qquad
\{x\in[-1,1]:\ell_-(x)\le y\}=[-1,x_+(y)],
\]
so the section of $R$ at height $y$ is $[x_-(y),x_+(y)]$.
For $A<y<B$, put $x_0=x_-(y)<1$ (since $\ell_+(1)=B>y$).
If $x_0=-1$, then $\ell_-(x_0)=A<y$. Otherwise $\ell_+(x_0)=y$, hence again
$\ell_-(x_0)<y$. In both cases, continuity of $\ell_-$ yields $x_+(y)>x_0$.
}
Consequently,
\[
p_Y(y)=\int_{x_-(y)}^{x_+(y)}p_{X,Y}(x,y)\,dx
\]
is continuous and bounded on $[A,B]$, and strictly positive on
$(A,B)$. In particular, $F_Y$ is continuously differentiable and
strictly increasing on $[A,B]$.

\smallskip
\noindent\textbf{What this step establishes.}
\begin{itemize}
\item The joint support is $R$, the effect support is $[A,B]$, and
the conditional support of $X$ given $Y=y$ is $[x_-(y),x_+(y)]$.
\item On $R$, $p_{X,Y}$ is bounded above and away from zero, with
bounded first partial derivatives.
\item The marginal density $p_Y$ is continuous and bounded, and is
positive on $(A,B)$. Hence $F_Y$ is continuously differentiable and
strictly increasing on $[A,B]$.
\end{itemize}

\emph{Step 2: forward regression and conditional variance.}
The map $F_X:[-1,1]\to[0,1]$ is a continuously differentiable
bijection, and $p_X$ is bounded away from zero. Since $U=F_X(X)$
and $V=F_Y(Y)$, invertibility of $F_X$ allows us to condition on
$X=x$ in place of $U=F_X(x)$. The definition of conditional
expectation gives
\[
\begin{aligned}
r(F_X(x))
&=\int_A^B F_Y(y)\,p_{Y\mid X}(y\mid x)\,dy
&=\int_{-\eta}^{\eta}F_Y(\ell_n(x))\,p_{N\mid X}(n\mid x)\,dn
&=\int_{-\eta}^{\eta}F_Y(\ell_n(x))\,p_N(n)\,dn.
\end{aligned}
\]
Here $p_{Y\mid X}$ and $p_{N\mid X}$ denote conditional densities.
The penultimate equality uses $Y=f(x)+g(x)N=\ell_N(x)$ when $X=x$, and
the last uses independence of the cause $X$ and the noise $N$, which
gives $p_{N\mid X}(n\mid x)=p_N(n)$. We use the final integral to
define the endpoint values at $x=\pm1$ as well.
Differentiating under the integral and applying the chain rule,
\[
r'(F_X(x))
=\frac{1}{p_X(x)}
  \int_{-\eta}^{\eta}p_Y(\ell_n(x))\ell_n'(x)p_N(n)\,dn.
\]
Every factor is uniformly bounded, so $r$ is Lipschitz on $[0,1]$.
Similarly,
\[
\nu_r(F_X(x))
=\int_{-\eta}^{\eta}F_Y(\ell_n(x))^2p_N(n)\,dn
 -r(F_X(x))^2
\]
defines a continuous version of $\nu_r$. For each fixed $x$, the
map $n\mapsto F_Y(\ell_n(x))$ is strictly increasing, since $g(x)>0$
and $F_Y$ is strictly increasing. As $N$ is nondegenerate, this
conditional variance is positive, including at $x=-1$ and $x=1$.
Thus $\nu_r(0),\nu_r(1)>0$.

\smallskip
\noindent\textbf{What this step establishes.}
\begin{itemize}
\item The forward regression $r$ is Lipschitz on $[0,1]$, providing
the regularity needed for the causal Dirichlet-energy bound.
\item The forward conditional variance $\nu_r$ has a continuous
version on $[0,1]$ with strictly positive endpoint values
$\nu_r(0),\nu_r(1)>0$.
\end{itemize}

\emph{Step 3: reverse regression near the lower endpoint.}
Keep the notation $s$ for the regression on the uniform scale and
write
\[
\begin{aligned}
s_Y(y):=\E[U\mid Y=y]
&=\int_{x_-(y)}^{x_+(y)}
  F_X(x)p_{X\mid Y}(x\mid y)\,dx
&=\frac{\displaystyle\int_{x_-(y)}^{x_+(y)}
  F_X(x)p_{X,Y}(x,y)\,dx}{p_Y(y)},
\qquad A<y<B.
\end{aligned}
\]

Choose $\varepsilon>0$ so that
$A+\varepsilon<\min\{\ell_+(-1),\ell_-(1)\}$.
For $A<y<A+\varepsilon$, the conditional support is
$[-1,\ell_-^{-1}(y)]$. Its length
\[
L(y):=\ell_-^{-1}(y)+1
\]
satisfies
\[
L(y)\asymp y-A,\qquad
L'(y)=\frac{1}{\ell_-'(\ell_-^{-1}(y))}
\longrightarrow\frac{1}{\ell_-'(-1)}>0.
\]
{\color{black}
Integrating the density bounds in~\eqref{eq:hetero-joint-bounds}
over the conditional support $[-1,-1+L(y)]$ gives
\[
p_{\min}L(y)
\le p_Y(y)=\int_{-1}^{-1+L(y)}p_{X,Y}(x,y)\,dx
\le p_{\max}L(y).
\]
Since $y-A=\ell_-(-1+L(y))-\ell_-(-1)$, the mean value theorem
and~\eqref{eq:hetero-slope-bounds} give
\[
\frac{y-A}{\Lambda}\le L(y)\le\frac{y-A}{\lambda}.
\]
Thus $p_Y(y)\asymp L(y)\asymp y-A$. Combining these bounds and
integrating from $A$ to $y$ yields
\[
\frac{p_{\min}}{2\Lambda}(y-A)^2
\le F_Y(y)=\int_A^y p_Y(z)\,dz
\le\frac{p_{\max}}{2\lambda}(y-A)^2,
\]
so $F_Y(y)\asymp(y-A)^2$.
}

We next show that $s_Y'(y)\to p_X(-1)/\bigl(2\ell_-'(-1)\bigr)>0$ as $y\downarrow A$.
Rescale the shrinking conditional support by $x=-1+L(y)\xi$ and
define the density
\[
\rho_y(\xi):=
\frac{p_{X,Y}(-1+L(y)\xi,y)}
     {\displaystyle\int_0^1p_{X,Y}(-1+L(y)t,y)\,dt},
\qquad 0\le\xi\le1.
\]
This is the conditional density of $(X+1)/L(y)$ given $Y=y$.
Since
\[
F_X(-1+L(y)\xi)
=L(y)\int_0^\xi p_X(-1+L(y)t)\,dt,
\]
we have
\[
\begin{aligned}
s_Y(y)&=L(y)H(y),\\
H(y)&:=
\int_0^1\left[\int_0^\xi p_X(-1+L(y)t)\,dt\right]
          \rho_y(\xi)\,d\xi.
\end{aligned}
\]
As $y\downarrow A$, continuity and strict positivity at the corner
$(-1,A)$ give $\rho_y\to1$ uniformly on $[0,1]$. Hence
\[
H(y)\xrightarrow[\;y\downarrow A\;]{}
p_X(-1)\int_0^1\xi\,d\xi=\frac{p_X(-1)}{2}.
\]

On $J:=(A,A+\varepsilon)$, the inverse function theorem gives
$L\in C^1(J)$ and $1/\Lambda\le L'\le1/\lambda$. Write
\[
\begin{aligned}
a(\xi,y)&:=p_{X,Y}(-1+L(y)\xi,y),
&Z(y)&:=\int_0^1a(t,y)\,dt,\\
G(\xi,y)&:=\int_0^\xi p_X(-1+L(y)t)\,dt,
&\rho_y(\xi)&=\frac{a(\xi,y)}{Z(y)},
\end{aligned}
\qquad H(y)=\int_0^1G(\xi,y)\rho_y(\xi)\,d\xi.
\]
The chain rule gives, uniformly for $\xi\in[0,1]$ and $y\in J$,
\[
\begin{aligned}
\partial_y a(\xi,y)
&=\partial_xp_{X,Y}(-1+L(y)\xi,y)L'(y)\xi
  +\partial_yp_{X,Y}(-1+L(y)\xi,y),\\
|\partial_y a(\xi,y)|&\le K_x/\lambda+K_y,\\
\partial_y G(\xi,y)
&=L'(y)\int_0^\xi p_X'(-1+L(y)t)t\,dt,
\qquad |\partial_y G(\xi,y)|\le\frac{\|p_X'\|_\infty}{2\lambda}.
\end{aligned}
\]
These uniform bounds justify differentiation under the integrals
by the mean value theorem and dominated convergence. The quotient
rule, $p_{\min}\le a,Z\le p_{\max}$, $\int_0^1\rho_y=1$, and
$0\le G\le\|p_X\|_\infty$ then give
\[
\begin{aligned}
|\partial_y\rho_y(\xi)|
&\le (K_x/\lambda+K_y)
  \left(\frac1{p_{\min}}+\frac{p_{\max}}{p_{\min}^2}\right),\\
|H'(y)|
&\le\frac{\|p_X'\|_\infty}{2\lambda}
 +\|p_X\|_\infty(K_x/\lambda+K_y)
  \left(\frac1{p_{\min}}+\frac{p_{\max}}{p_{\min}^2}\right).
\end{aligned}
\]
In particular, $L(y)H'(y)\to0$ as $y\downarrow A$. It follows that
\[
s_Y'(y)=L'(y)H(y)+L(y)H'(y)
\xrightarrow[\;y\downarrow A\;]{}
\frac{p_X(-1)}{2\ell_-'(-1)}>0.
\]

Thus $s_Y'$ is bounded and bounded away from zero near $A$.  Also, for $y\in J$, the conditional support $[-1,-1+L(y)]$
of $X$ given $Y=y$ shrinks to $\{-1\}$ as $y\downarrow A$.
Since $U=F_X(X)$ and $F_X$ is nondecreasing, we have
$0\le U\le F_X(-1+L(y))$ conditional on $Y=y$. Thus,
\[
\Var(U\mid Y=y)=\E[U^2\mid Y=y]-\E[U\mid Y=y]^2 \le \E[U^2\mid Y=y]\le F_X(-1+L(y))^2\xrightarrow[\;y\downarrow A\;]{}0.
\]
\smallskip
\noindent\textbf{What this step establishes.}
\begin{itemize}
\item As $y\downarrow A$, the conditional support shrinks with
length $L(y)\asymp y-A$, giving $p_Y(y)\asymp y-A$ and
$F_Y(y)\asymp(y-A)^2$.
\item The reverse regression on the original effect scale satisfies
$s_Y'(y)\to p_X(-1)/(2\ell_-'(-1))>0$.
\item The reverse conditional variance satisfies
$\Var(U\mid Y=y)\to0$ as $y\downarrow A$.
\end{itemize}

\emph{Step 4: reverse regression on the uniform scale.}
The upper endpoint follows by applying Step 3 to
\[
\widetilde X=-X,\qquad \widetilde N=-N,\qquad
\widetilde Y=-Y,\qquad
\widetilde f(x)=-f(-x),\qquad \widetilde g(x)=g(-x).
\]
This model satisfies the same assumptions, because
$\widetilde f'(x)-\eta|\widetilde g'(x)|
=f'(-x)-\eta|g'(-x)|\ge\lambda$.
Its uniform variables are $\widetilde U=1-U$ and
$\widetilde V=1-V$. In particular,
$s_Y(y)=1-\widetilde s_Y(-y)$, so as $y\uparrow B$,
\[
p_Y(y)\asymp B-y,\qquad
1-F_Y(y)\asymp(B-y)^2,
\]
$s_Y'$ is bounded, and $\Var(U\mid Y=y)\to0$.

On any compact subinterval of $(A,B)$, $p_Y$ is bounded away from
zero. The numerator and denominator defining $s_Y$ are Lipschitz
there: their integration endpoints are Lipschitz, their integrands
are bounded, and their derivatives in $y$ are bounded wherever the
integrands are evaluated. Thus $s_Y$ is locally Lipschitz on $(A,B)$.

Since $F_Y$ is strictly increasing, conditioning on $V=v$ is the same as
conditioning on $Y=F_Y^{-1}(v)$. We may therefore take
\[
s(v)=s_Y(F_Y^{-1}(v)),\qquad 0<v<1,
\]
which is locally Lipschitz on $(0,1)$.

{\color{black}Since $p_Y>0$ on $(A,B)$, $F_Y^{-1}$ is a $C^1$ diffeomorphism of $(0,1)$
onto $(A,B)$. Hence $s$ is locally absolutely continuous, and the chain
rule below holds almost everywhere.}
Therefore,
\[
s'(v)=\frac{s_Y'(y)}{p_Y(y)},\qquad y=F_Y^{-1}(v).
\]
Near $v=0$, Step 3 gives $y-A\asymp\sqrt v$ and
$p_Y(y)\asymp\sqrt v$, while $s_Y'$ is bounded and strictly
positive. The reflected argument gives the corresponding upper
bound near $v=1$. Combining these endpoint estimates with the
local Lipschitz property in the interior, there are
$C_s,c_s>0$ and $v_1\in(0,1)$ such that
\[
|s'(v)|\le\frac{C_s}{\sqrt{v(1-v)}}
\quad\text{for almost every }v\in(0,1),
\qquad
s'(v)\ge\frac{c_s}{\sqrt v}
\quad\text{for almost every }v\in(0,v_1).
\]
The conditional variances in Steps 3 and 4 also give a version $\nu_s(v)=\Var(U\mid Y=F_Y^{-1}(v))$ with $\nu_s(0^+)=\nu_s(1^-)=0$.

\smallskip
\noindent\textbf{What this step establishes.}
\begin{itemize}
\item The reverse regression $s$ is locally Lipschitz on $(0,1)$.
Its derivative satisfies the global bound
$|s'(v)|\le C_s/\sqrt{v(1-v)}$ almost everywhere and the lower
bound $s'(v)\ge c_s/\sqrt v$ almost everywhere near $v=0$.
\item The reverse conditional variance has a version with
$\nu_s(0^+)=\nu_s(1^-)=0$.
\end{itemize}

\emph{Step 5: Dirichlet rates and score ordering.}
Let $L_r$ be a Lipschitz constant of $r$ from Step 2.
Lemma~\ref{lem:rates}(b) gives
\[
\mathcal D_\sigma[r]\le\sqrt{\pi/2}\,L_r^2\sigma^3.
\]
By Step 4, $s$ is $[0,1]$-valued and locally absolutely continuous,
with the derivative bounds stated there. Lemma~\ref{lem:rates}(c) with
$C=C_s$ gives the upper bound for $\sigma\le e^{-1}$, and part (d) with
$c=c_s$ gives the lower bound for $\sigma<v_1^2$, so (i) holds with
\[
\begin{aligned}
\sigma_1&:=\min\{e^{-1},v_1^2\},
\qquad c_1:=\frac{c_s^2}{6\sqrt e},\\
C_1&:=\max\left\{c_1,\sqrt{\pi/2}\,L_r^2,
                     3\max\{4C_s^2,4\}\right\}.
\end{aligned}
\]

For (ii), recall
\[
E_r=\int_0^1\nu_r(u)\,du,\qquad
E_s=\int_0^1\nu_s(v)\,dv.
\]
Both conditional variances are bounded, and their endpoint limits are
$\nu_r(0),\nu_r(1)>0$ (Step 2) and $\nu_s(0^+)=\nu_s(1^-)=0$ (Step 4).
Applying Lemma~\ref{lem:weights}(ii) gives
\[
\begin{aligned}
\int_0^1 w_\sigma(u)\nu_r(u)\,du
&=E_r+\frac{\sigma}{\sqrt{2\pi}}
  \bigl(2E_r-\nu_r(0)-\nu_r(1)\bigr)+o(\sigma),\\
\int_0^1 w_\sigma(v)\nu_s(v)\,dv
&=E_s+\frac{2\sigma}{\sqrt{2\pi}}E_s+o(\sigma).
\end{aligned}
\]
Since $\mu_\sigma\asymp\sigma$, part (i) also gives
\[
\frac{\mathcal D_\sigma[r]}{\mu_\sigma}
=O(\sigma^2),\qquad
\frac{\mathcal D_\sigma[s]}{\mu_\sigma}
=O\!\left(\sigma^2\log(1/\sigma)\right)=o(\sigma).
\]
Subtracting the two population scores gives
\[
\begin{aligned}
Q_\sigma(V\!\to\!U)-Q_\sigma(U\!\to\!V)
&=\bigl(1+\sigma\sqrt{2/\pi}\bigr)(E_s-E_r)\\
&\quad+\frac{\sigma}{\sqrt{2\pi}}
  \bigl(\nu_r(0)+\nu_r(1)\bigr)+o(\sigma).
\end{aligned}
\]
If $E_s>E_r$, this difference tends to the positive constant
$E_s-E_r$. If $E_s=E_r$, its coefficient of $\sigma$ is strictly
positive because $\nu_r(0),\nu_r(1)>0$. Thus the forward score is
smaller for all sufficiently small $\sigma>0$ whenever $E_r\le E_s$.
The additive case follows by taking $g\equiv1$.

\end{proof}

\section{Efficient computation}\label{app:fast}

Throughout this section we assume that, with ties
broken at random, the normalized rank vectors $u,v$ of $X,Y$ are
permutations of the grid $\{1,\dots,n\}/(n+1)$ (Section~\ref{sec:fft}). Let $v^U\in\R^n$ be $v$ reordered by increasing $U$-rank, so that $v^U_k$ is the value of $v$ at the
observation with $U$-rank $k$, and define $u^V$ analogously by reordering
$u$ by increasing $V$-rank.

\paragraph{Kernel in rank units.}
Grid points with ranks $k$ and $l$ are $|k-l|/(n+1)$ apart, so the
median-based bandwidth of Section~\ref{sec:method} satisfies
\[
\widehat\sigma:=\widehat\sigma(u;m)=\widehat\sigma(v;m)=\frac{h}{n+1},\qquad
h:=m\cdot\operatorname{median}_{k<l}|k-l|,
\]
and the kernel weight between the observations with $U$-ranks $k$ and
$l$ is  $\kappa(k-l):=\exp\bigl[-\tfrac{(k-l)^2}{2h^2}\bigr]$.
Hence $W^U$, with the observations indexed by $U$-rank, and $W^V$, with the
observations indexed by $V$-rank, both equal the symmetric Toeplitz matrix
\[
W^\kappa:=\bigl(\kappa(k-l)\bigr)_{k,l=1}^n,
\qquad
d:=W^\kappa\mathbf 1,\qquad d_k=\sum_{l=1}^n\kappa(k-l),
\]
and $d^U$, $d^V$ both equal $d$.

Since exactly $n-j$ pairs have lag
$j\in\{1,\dots,n-1\}$, the median lag, and hence $h$, is found in $O(n)$
time by accumulating these counts. These counts do not depend on the sample, so $\widehat\sigma$ depends only on $n$ and $m$ for every $n$, and
$\widehat\sigma\to m(1-1/\sqrt2)$ as $n\to\infty$.

\begingroup
\renewcommand{\theproposition}{\ref{prop:fast_computation}}%
\ifdefined\theHproposition
  \renewcommand{\theHproposition}{prop-fast-restated}%
\fi
\begin{proposition}[Efficient computation]
Under marginal equalization, with ties broken at random,
$M_n(U)=M_n(V)=\sum_k d_k$ and
\[
\begin{aligned}
E_n(U\!\to\!V)&=\textstyle\sum_k d_k(v^U_k)^2
 -\sum_{k,l}\kappa(k-l)\,v^U_kv^U_l,\\
E_n(V\!\to\!U)&=\textstyle\sum_k d_k(u^V_k)^2
 -\sum_{k,l}\kappa(k-l)\,u^V_ku^V_l,
\end{aligned}
\]
so the GDECI rule selects $X\to Y$ exactly when
$E_n(U\!\to\!V)<E_n(V\!\to\!U)$. Each double sum equals $z^\top W^\kappa z$
for $z\in\{v^U,u^V\}$, where $W^\kappa z$ is a discrete convolution of $z$ with
$(\kappa(j))_{|j|<n}$, so both energies can be evaluated by fast Fourier
transforms in $O(n\log n)$ time and $O(n)$ memory.
\end{proposition}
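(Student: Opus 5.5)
The plan is to reduce everything to a simultaneous relabelling of the observations, and then use two facts: the Laplacian quadratic form is unchanged under such a relabelling, and a symmetric Toeplitz product is a linear convolution.

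\emph{Step 1: reindexing.} Let $\pi$ be the permutation that sorts the observations by $U$-rank, so that observation $\pi(k)$ has $u_{\pi(k)}=k/(n+1)$. With $P$ the corresponding permutation matrix, $P^\top W^UP$ has entries
\[
k_{\widehat\sigma}\bigl(k/(n+1),l/(n+1)\bigr)=\exp\bigl[-(k-l)^2/(2h^2)\bigr]=\kappa(k-l),
\]
because $\widehat\sigma=h/(n+1)$. Hence $P^\top W^UP=W^\kappa$. Since $P^\top\mathbf 1=\mathbf 1$, we also get $P^\top D^UP=\operatorname{diag}(d)$, and therefore $P^\top L^UP=L^\kappa$. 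The same argument applies to $V$, with its own permutation, and gives the same $W^\kappa$. This uses that both rank vectors are permutations of the same grid, which holds after random tie-breaking, and that the median-based bandwidths coincide (Section~\ref{sec:method}).

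\emph{Step 2: energies and masses.} For the edge masses, $M_n(U)=\mathbf 1^\top W^U\mathbf 1=\mathbf 1^\top W^\kappa\mathbf 1=\sum_kd_k$, and the same value is obtained for $M_n(V)$. For the energy, set $v^U=P^\top v$. Then
\[
E_n(U\!\to\!V)=v^\top L^Uv=(P^\top v)^\top L^\kappa(P^\top v),
\]
and expanding $L^\kappa=\operatorname{diag}(d)-W^\kappa$ gives the stated formula. The reverse energy is handled symmetrically with $u^V$. Because the two denominators are equal and positive, comparing $\Delta_n$ in the two directions is the same as comparing the energies.

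\emph{Step 3: complexity.} Computing the ranks and the permutations costs $O(n\log n)$. The bandwidth parameter $h$ is obtained in $O(n)$ from the lag counts $n-j$. The degrees $d=W^\kappa\mathbf 1$ and the products $W^\kappa z$ satisfy $(W^\kappa z)_k=\sum_l\kappa(k-l)z_l$. This is the linear convolution of $z$ with the length-$(2n-1)$ sequence $(\kappa(j))_{|j|<n}$, restricted to indices $k=1,\dots,n$. Zero-padding both sequences to length $O(n)$, for instance the next power of two at least $3n$, turns it into a circular convolution computable by FFT in $O(n\log n)$ time and $O(n)$ memory. The final inner products $\sum_k d_kz_k^2$ and $z^\top(W^\kappa z)$ then cost $O(n)$.

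\emph{Main obstacle.} The only delicate point is the bookkeeping in Step 1: confirming that the entries of the permuted kernel matrix depend only on the rank lag. This relies on the bandwidth being identical in both directions and expressible in rank units; without it the two directions would not share the same $W^\kappa$. The FFT step is standard once the Toeplitz structure is established.
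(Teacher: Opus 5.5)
Your proposal is correct and follows the paper's proof essentially step by step: you relabel the observations by $U$-rank (resp.\ $V$-rank), which turns both Laplacians into $L^\kappa=\operatorname{diag}(d)-W^\kappa$ with equal edge masses $\mathbf 1^\top d$, and you evaluate the Toeplitz product as a zero-padded FFT convolution. The only differences are cosmetic: you pad the linear convolution to length at least $3n$ and obtain $d$ as $W^\kappa\mathbf 1$, whereas the paper embeds $W^\kappa$ in a circulant of order $P\ge 2n-1$ and gets $d$ from cumulative sums of $\kappa$; both stay within $O(n\log n)$ time and $O(n)$ memory.
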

\addtocounter{proposition}{-1}
\endgroup

\begin{proof}[Proof of Proposition~\ref{prop:fast_computation}]
\emph{Energies and edge masses.} Permuting the rows and columns of $W^U$ and
the entries of $v$ simultaneously leaves $v^\top L^Uv$ and
$\sum_{i,j}W^U_{ij}$ unchanged. Ordering the observations by $U$-rank thus
gives
\[
M_n(U)=\mathbf 1^\top d,
\qquad
E_n(U\!\to\!V)=(v^U)^\top\bigl(\operatorname{diag}(d)-W^\kappa\bigr)v^U
=\textstyle\sum_k d_k(v^U_k)^2-(v^U)^\top W^\kappa v^U,
\]
and ordering by $V$-rank gives the same with $u^V$ in place of $v^U$. In
particular $M_n(V)=M_n(U)$, so comparing $\Delta_n$ reduces to comparing
$E_n$.

\emph{Cost.}
For $k=1,\dots,n$, $(W^\kappa z)_k=\sum_l\kappa(k-l)z_l=(\kappa*z)_k$ is the
convolution of $z$ with the lag sequence $(\kappa(j))_{|j|<n}$. Let $P$ be a power of two with $2n-1\le P<4n$ (the upper bound gives $P=O(n)$), and let $C$ be the circulant matrix of order $P$ with first row
\[
c:=\bigl(\kappa(0),\dots,\kappa(n-1),0,\dots,0,\kappa(n-1),\dots,\kappa(1)\bigr)
\in\R^P,
\]
each further row being the previous one shifted cyclically by one position to
the right. Since $P\ge2n-1$, the two blocks of $\kappa$-values in $c$ do not
overlap, and the upper-left $n\times n$ block of $C$ is $W^\kappa$. Hence $W^\kappa z$ consists of the first $n$
entries of $C\bar z$, where $\bar z\in\R^P$ is $z$ padded with zeros. The matrix $C$ is never formed, since $C\bar z$ is the circular convolution of
$c$ with $\bar z$, which the fast Fourier transform computes in
$O(P\log P)=O(n\log n)$ time. Two such products give $W^\kappa v^U$ and
$W^\kappa u^V$, and the degrees $d_k=\sum_{j=k-n}^{k-1}\kappa(j)$ follow from
cumulative sums of $\kappa$ in $O(n)$ time. With the two sorts for the rank
transforms and a fixed number of inner products, each costing $O(n)$, the
total cost is $O(n\log n)$. Apart from FFT work arrays of length $P$, only
$c$, $d$ and the two signals are stored, giving $O(n)$ memory.
\end{proof}

\section{Auxiliary tables and figures}\label{app:auxilairy_info}
\begin{figure*}[h]
  \centering
  \includegraphics[width=0.7\textwidth]{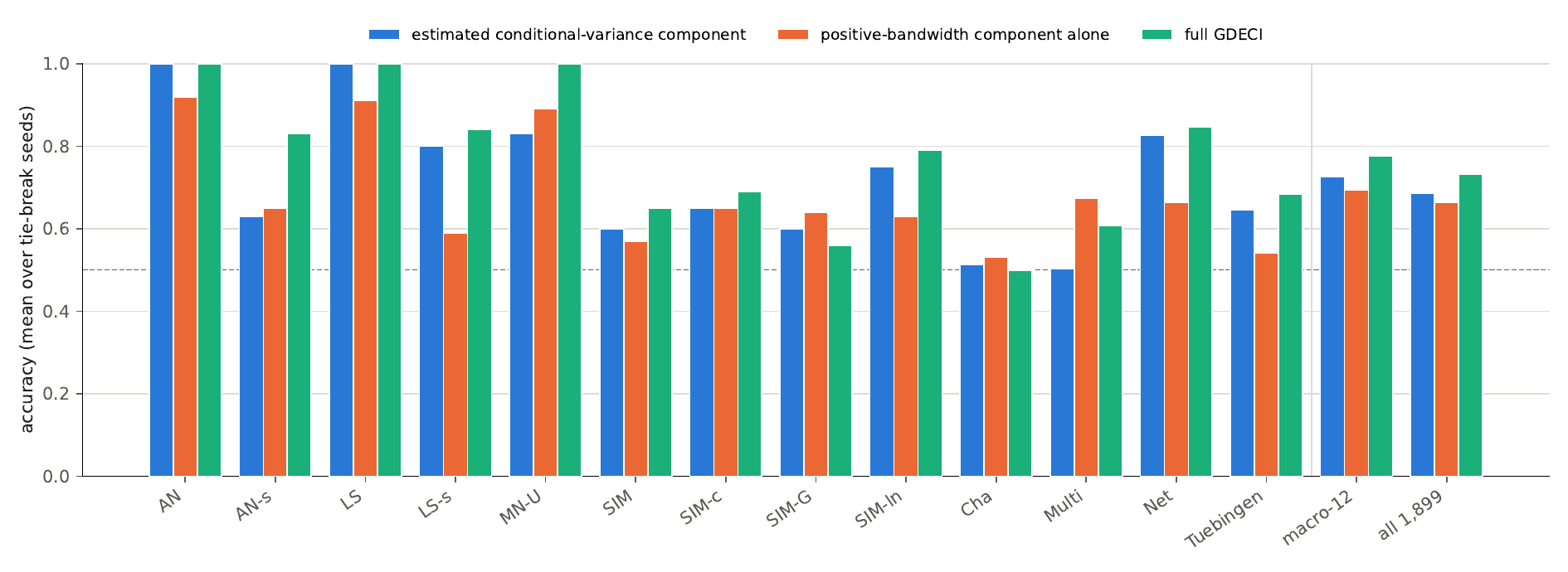}\hfill
  \includegraphics[width=0.3\textwidth]{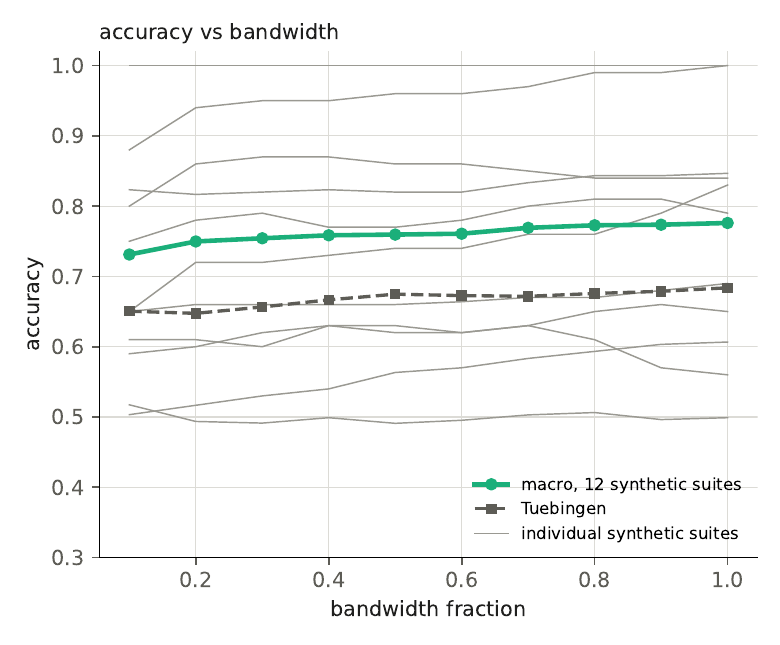}
  \caption{Positive-bandwidth ablation over all $1{,}899$ pairs. (a) Accuracy of
  the estimated conditional-variance component (blue), the positive-bandwidth component alone
  (orange), and the full GDECI score (green), by suite, macro over the 12
  synthetic suites, and pooled over all pairs. (b) Accuracy of the full score
  at bandwidth $s\,\sigma_0$, $s \in \{0.1, 0.2, \dots, 1\}$: macro mean over
  the 12 synthetic suites (solid green), Tuebingen (dashed) and individual
  synthetic suites (grey).}
  \label{fig:ablation}
\end{figure*}
\begin{table}[h]
  \centering
  \caption{Robustness of the small-bandwidth decomposition to the fitting
  window. In each row, $B + a\,\sigma$ is fitted on the $k$ smallest
  bandwidths ($s \le k/10$) as per Section~\ref{sec:ablation}. Corrected, misdirected and net counts compare the $B$-alone and full-score decisions. Accuracies are pooled over all $1{,}899$ pairs and averaged over
  ten tie-breaks. Decision counts and McNemar $p$ use a single tie-break. The
  main analysis uses $k = 5$ (bold).}
  \label{tab:fitwindow}
  \small
  \begin{tabular}{lcccccc}
    \toprule
    window & $k$ & $B$ alone & $\sigma>0$ part alone
      & corrected & misdirected & net (McNemar $p$) \\
    \midrule
    $s \le 0.3$ & 3  & 0.677 & 0.668 & 207 & 107 & $+100$ ($1.8\times10^{-8}$) \\
    $s \le 0.4$ & 4  & 0.682 & 0.666 & 195 & 103 & $+92$ ($1.1\times10^{-7}$) \\
    $\mathbf{s \le 0.5}$ & \textbf{5} & \textbf{0.686} & \textbf{0.663}
      & \textbf{185} & \textbf{101} & $\mathbf{+84}$ ($\mathbf{7.7\times10^{-7}}$) \\
    $s \le 0.6$ & 6  & 0.690 & 0.654 & 175 & 97  & $+78$ ($2.6\times10^{-6}$) \\
    $s \le 0.7$ & 7  & 0.691 & 0.647 & 170 & 91  & $+79$ ($1.2\times10^{-6}$) \\
    $s \le 0.8$ & 8  & 0.693 & 0.640 & 163 & 87  & $+76$ ($1.8\times10^{-6}$) \\
    $s \le 0.9$ & 9  & 0.695 & 0.642 & 158 & 85  & $+73$ ($3.3\times10^{-6}$) \\
    \bottomrule
  \end{tabular}
\end{table}